\documentclass{article}
\usepackage[T1]{fontenc}
\usepackage{iclr2027_conference,times}

\usepackage{amsmath,amsfonts,bm}
\usepackage{amsthm}
\newtheorem{definition}{Definition}
\newtheorem{lemma}{Lemma}
\newtheorem{theorem}{Theorem}

\def\eqref#1{equation~\ref{#1}}

\def\1{\mathbf{1}}

\DeclareMathAlphabet{\mathsfit}{\encodingdefault}{\sfdefault}{m}{sl}
\SetMathAlphabet{\mathsfit}{bold}{\encodingdefault}{\sfdefault}{bx}{n}

\newcommand{\E}{\mathbb{E}}

\newcommand{\KL}{D_{\mathrm{KL}}}

\usepackage{booktabs}
\usepackage{algorithm}
\usepackage{algpseudocode}
\usepackage{hyperref}
\usepackage{cleveref}
\usepackage{amssymb}
\usepackage{bbm}
\usepackage{amsthm}

\usepackage{booktabs}
\usepackage{placeins}
\usepackage{pgfplots}
\pgfplotsset{compat=1.17}
\usepgfplotslibrary{groupplots}
\pgfplotsset{compat=1.16}
\definecolor{cPlain}{HTML}{8C8C8C}
\definecolor{cFixed}{HTML}{9EC1E8}
\definecolor{cAdapt}{HTML}{1F5FBF}
\usepackage{wrapfig}
\usepackage{hyperref}
\usepackage{url}
\usepackage{tikz}
\usepackage{algorithm}
\usepackage{siunitx}
\usepackage{subcaption}
\usepackage{amsthm}
\newtheorem{corollary}{Corollary}
\usepackage{mathtools}
\usepackage[table]{xcolor}
\usepackage{booktabs}
\usepackage{siunitx}
\usepackage{thmtools}
\usepackage{thm-restate}

\usepackage{enumitem}
\usepackage{pgfplots}
\pgfplotsset{compat=1.18}
\usepackage{amsmath,amsthm} 
\newtheorem{proposition}{Proposition}
\newtheorem{remark}{Remark}
\newtheorem{example}{Example}
\title{Adaptive Multi-Token Prediction: \\ How far should we go?}

\author{Emile Anand, Mohammed Rehan Parwani, Xinyuan Cao \\
Georgia Institute of Technology\\
\texttt{\{eanand,mparwani3,xcao78\}@gatech.edu}}

\usetikzlibrary{arrows.meta, positioning}

\begin{document}

\iclrfinalcopy
\maketitle

\begin{abstract}
Multi-token prediction (MTP) can accelerate language-model decoding by drafting several tokens and verifying them in a single target-model pass. However, the best speculation horizon changes with context: longer drafts offer greater amortization when reliable, but can waste compute on suffixes discarded after a rejection. Unlike autoregressive drafting, where a stopping rule can decide token by token, MTP must fix the horizon before drafting, as all heads fire in the same forward pass. We therefore formulate horizon selection as an online control problem, using verification feedback to learn how far the draft can be trusted in each context, and introduce Adaptive Multi-Token Prediction (AMTP): a lightweight controller that requires no retraining and leaves the exact verifier unchanged. 
To guide horizon selection, we characterize the expected number of tokens committed at each horizon via \emph{acceptance survival}, the probability that a draft prefix is fully accepted. 
Before each round, AMTP predicts how far the draft is likely to survive and picks the horizon with the highest expected tokens per second, learning freely from every verification pass. Theoretically, we show that any online learner's prediction error translates into throughput regret against the best horizon for each context, and in turn into a bound on wall-clock completion time. With online gradient descent, this regret is sublinear. 
Across five draft–target pairings built on Llama-3.1-8B-Instruct with Medusa heads, Meta’s MTP models, and Qwen3-4B-Inst-2507-MTP, evaluated on AIME 2025 and GSM8K, AMTP decodes up to 4.1$\times$ faster than standard decoding and matches or outperforms the fixed window in 8 of 10 settings, with throughput gains of up to 10\%.
\end{abstract}

\section{Introduction}
Autoregressive decoding generates each new token through an evaluation of a target language model. Speculative decoding reduces this sequential bottleneck by using a cheaper draft mechanism to propose several tokens and evaluating them in parallel with the target model \citep{stern2018blockwise,leviathan2023fast,chen2023accelerating}. With an appropriate acceptance-and-correction rule, this procedure preserves the target distribution while committing multiple tokens in a single round \citep{leviathan2023fast,chen2023accelerating,xia2024unlocking}. Multi-token prediction (MTP) provides a natural proposal mechanism by predicting several future tokens from a shared representation \citep{gloeckle2024better,cai2024medusa,ankner2024hydra}, and have witnessed enormous success in practice: for instance, MTP modules are now shipped in frontier models \citep{deepseek2024v3}, and pretrained next-token models can be converted into MTP drafters without architectural changes \citep{samragh2025future,kirchenbauer2026mtp,zhao2026mtpd}. These methods create an inference-time decision that is distinct from learning a good draft model: \emph{how many tokens should be proposed before the next verification step?} We refer to this quantity as the \textit{speculation horizon}.\looseness=-1

The speculation horizon directly affects \emph{decoding efficiency}. A longer horizon amortizes verification over more committed tokens when the draft is reliable, but after an early rejection the discarded suffix adds latency and no output. Horizon selection therefore balances the expected gain from additional accepted tokens against the risk of speculating too far. We evaluate each horizon by its conditional throughput: the ratio of its expected number of committed tokens to its expected round latency. Even for a fixed draft-target pair, the throughput-maximizing horizon varies with context, because acceptance probabilities at each draft position depend on context. For MTP, this choice has a structure that isn't exploited in current methods \citep{anand2026smatattentionstructuredlongcontextsequence}: an autoregressive drafter can decide token by token whether to keep drafting, but MTP heads produce all draft tokens in a single forward pass, so the horizon must be fixed before drafting, from information available at that point.

Existing adaptive methods do not make this  decision without additional training. Most stop an autoregressive drafter token by token, using a trained classifier or acceptance head \citep{mamou2024dynamic,huang2024specdecpp}, entropy-based bounds \citep{agrawal2024adaedl,zhang2025svip}, or parallel drafting and verification \citep{liu2025pearl}. Others shape draft trees from confidence after drafting \citep{li2024eagle2,li2025eagle3,chen2026entmtp}, prune verification prefixes with predicted survival probabilities \citep{cheng2026dspark}, or train reinforcement-learning policies for draft depth \citep{zhang2026learning}. Concurrent work trains a router that selects the span length before decoding, but requires new prediction heads and verifies against greedy decoding only \citep{chintha2026cadence}. In fact, to our knowledge, AMTP is the first training-free method that selects the MTP horizon before drafting at every round. A complementary line of work uses verification feedback online to adapt the draft model \citep{liu2024online,qian2026drafts} or to select among drafters and hyperparameter configurations with regret guarantees against the best fixed choice \citep{liu2026notabandit,hou2025banditspec}. However, in this work, we compare against the stronger benchmark of the best horizon at each visited context.  We study how learning acceptance behavior from verification feedback affects horizon-selection performance, and, for a given target model and draft mechanism, we develop an online controller and bound its conditional throughput regret relative to the best horizon at each visited context in terms of cumulative prediction error.
 Section~\ref{sec:related-work} discusses related work in detail.

To relate prediction error to horizon-selection, we first identify the acceptance statistics that influence the expected number of committed tokens for each horizon. A draft token can be committed as an accepted proposal only if all prior proposals are also accepted. We characterize the joint acceptance event via \emph{acceptance survival}, the probability that a draft prefix is fully accepted, which further determines the expected number of committed tokens exactly. We express them in terms of \emph{conditional rejection hazards}, where the hazard at a position is its rejection probability given acceptance of all prior proposals. Our analyses holds for general causal proposals, including independent and correlated prediction heads \citep{cai2024medusa,ankner2024hydra}. Estimating the hazards requires distinguishing between proposed and surviving prefixes: verification provides draft-target discrepancies along sampled proposal prefixes, whereas the hazards average these discrepancies under the distribution of prefixes that survive verification. So, we derive a survival-weighted regression objective that targets these conditional hazards, and its excess population loss is the survival-weighted hazard-prediction error that enters our throughput-regret bound, mirroring decision-focused learning, where prediction losses are judged by the induced decisions \citep{elmachtoub2022smart}.

We use this objective to develop \emph{Adaptive Multi-Token Prediction} (AMTP), a training-free horizon controller equipped with an online adaptive learning algorithm. Before drafting, AMTP combines hazard-based forecasts of committed tokens with estimates of round latency to evaluate the throughput of each horizon. Short horizons, however, conceal verification feedback at later positions. To learn under this partial feedback, AMTP combines greedy throughput-based selection with randomized full-horizon exploration \citep{langford2007epoch} and uses inverse-probability weighting to obtain unbiased estimates of the survival-weighted regression loss. 

Our contributions are as follows:
\begin{itemize}
    \item \textbf{Acceptance survival.} We characterize horizon selection through accepted-prefix survival and round cost for general causal proposals, showing how survival determines the expected number of committed tokens for each horizon (\S\ref{sec:formulation}).
    \item \textbf{AMTP.} We introduce a training-free online controller that chooses the MTP horizon before each round, by maximizing predicted tokens per second, learning from partial feedback of short horizons via randomized exploration and inverse-probability weighting (\S\ref{sec:controller}).
    \item \textbf{Prediction error to wall-clock time.} We show that the prediction error of any online learner translates into throughput regret against the best horizon for each context, and in turn into a bound on expected completion time. With online gradient descent \citep{zinkevich2003online} under partial feedback, the regret is $O(T^{4/5}(1+P_T)^{1/5})$, which is sublinear (\S\ref{sec:guarantees}).
    \item \textbf{Experiments.} Across five draft--target pairings and two reasoning benchmarks, AMTP decodes up to 4.1$\times$ faster than standard decoding and matches or outperforms the fixed window in 8 of 10 settings, with throughput gains of up to 10\% (\S\ref{sec:experiments}).
\end{itemize}

\section{Related Work}

\label{sec:related-work}
 
\textbf{Speculative decoding and multi-token proposals.}
Speculative decoding drafts tokens with a cheap proposal and verifies them in parallel with the target model, using an acceptance-and-correction rule that leaves the target distribution unchanged \citep{leviathan2023fast,chen2023accelerating}. Most work uses a separate next-token draft model, or, as in EAGLE, a lightweight head that drafts autoregressively at the feature level \citep{li2024eagle}. Multi-token prediction (MTP) instead trains several future-token heads on a shared representation \citep{gloeckle2024better}; Medusa adds such heads to an existing model and verifies candidate continuations with tree attention \citep{cai2024medusa}, and recent work obtains MTP models from pretrained next-token models by self-distillation \citep{kirchenbauer2026mtp}. Joint multi-token prediction studies how to learn representations that support predicting several tokens at once \citep{ahn2025joint}. We keep the exact verifier and study a different question: given a fixed proposal, how many tokens should we draft in each round?
 
\textbf{Adaptive draft length for autoregressive drafters.}
A substantial line of work adapts the draft length of a next-token drafter. DISCO trains a classifier that decides after each drafted token whether to continue \citep{mamou2024dynamic}; SpecDec++ adds an acceptance-prediction head to the draft model and stops once the predicted probability of a rejection crosses a threshold \citep{huang2024specdecpp}; AdaEDL stops when an entropy-based lower bound on the acceptance probability falls below a threshold \citep{agrawal2024adaedl}; and SVIP derives a similar entropy-based stopping policy for long-form generation \citep{zhang2025svip}. EAGLE-2 instead grows a draft tree whose shape follows context-dependent confidence \citep{li2024eagle2}, and Learning to Draft trains reinforcement-learning policies for draft depth and verification size with throughput rewards \citep{zhang2026learning}.
These methods share a structural assumption: drafting is sequential, so the decision to stop can be made \emph{token by token}, after each drafted token is observed and before the next one is paid for. However, MTP removes this option. All heads fire in the same forward pass, so the cost of drafting is incurred up front and the horizon must be fixed \emph{before} any draft token is sampled. Truncating a complete MTP draft afterwards saves verification cost but not drafting cost, and choosing the cut point after inspecting sampled tokens can break exactness (Appendix~\ref{sec:setup-exactness}). AMTP is designed for this setting: it chooses the horizon from pre-draft information only, and our statistical and online-optimization analysis of horizon selection and acceptance survival applies to any causal proposal, including both independent and correlated heads.

\textbf{Online learning for speculative decoding.}
Verification feedback has also been used to adapt speculative decoding online. OnlineSpec updates an evolving draft model and relates its dynamic regret to accepted length at a fixed candidate length \citep{qian2026drafts}. Not-a-Bandit gives no-regret guarantees for selecting among draft models \citep{liu2026notabandit}. BanditSpec treats speculative-decoding hyperparameters, including the speculation length, as bandit arms and bounds regret against the best fixed configuration \citep{hou2025banditspec}. Our guarantee uses a stronger comparator, the best horizon \emph{for each context} along the realized decoding trajectory, and is learner-agnostic: any online regression method's prediction error transfers to throughput regret, with online gradient descent \citep{zinkevich2003online} as one instantiation. The broader strategy of translating prediction or optimization error into decision quality is related to perturbation-based model predictive control \citep{lin2022mpc} and online adaptive policy selection \citep{lin2023gaps,pmlr-v247-lin24a}, which rely on assumptions about the controlled dynamics, such as contractive perturbations, that we do not impose on the language-model prefixes.

\section{Speculation as a Survival Problem}

We first describe one round of speculation, and then show that a single sequence of rejection hazards determines the expected gain of every horizon.

\label{sec:formulation}

\subsection{One Round of Speculation: Draft, Verify, Commit}
\label{sec:setup-causal}
At round $t$, let $H_t=x_{1:n_t}$ be the committed prefix and let $\mathcal F_t$ contain all information available before the next horizon is chosen, including $H_t$, system state, and the controller parameters. For $1\le k\le K$, we define the target and proposal block laws
\begin{equation}
 P_t^{(k)}(y_{1:k})=\prod_{j=1}^k p_{t,j}(y_j| y_{<j}),
 \qquad
 Q_t^{(k)}(y_{1:k})=\prod_{j=1}^k q_{t,j}(y_j| y_{<j}),
 \label{eq:causal-laws}
\end{equation}
where conditioning on $H_t$ and the fixed proposal parameters $\phi$ is suppressed. We let $\smash{P_t\coloneqq P_t^{(K)}}$ and $\smash{Q_t\coloneqq Q_t^{(K)}}$. In this lens, independent MTP heads form the special case $\smash{q_{t,j}(\cdot| y_{<j})=q_{t,j}(\cdot| H_t)}$, where recurrent or otherwise correlated heads \citep{ankner2024hydra} keep the dependence on $y_{<j}$, and our analytic framework covers both of these cases.

\begin{figure}[t]
\centering
\includegraphics[width=\linewidth]{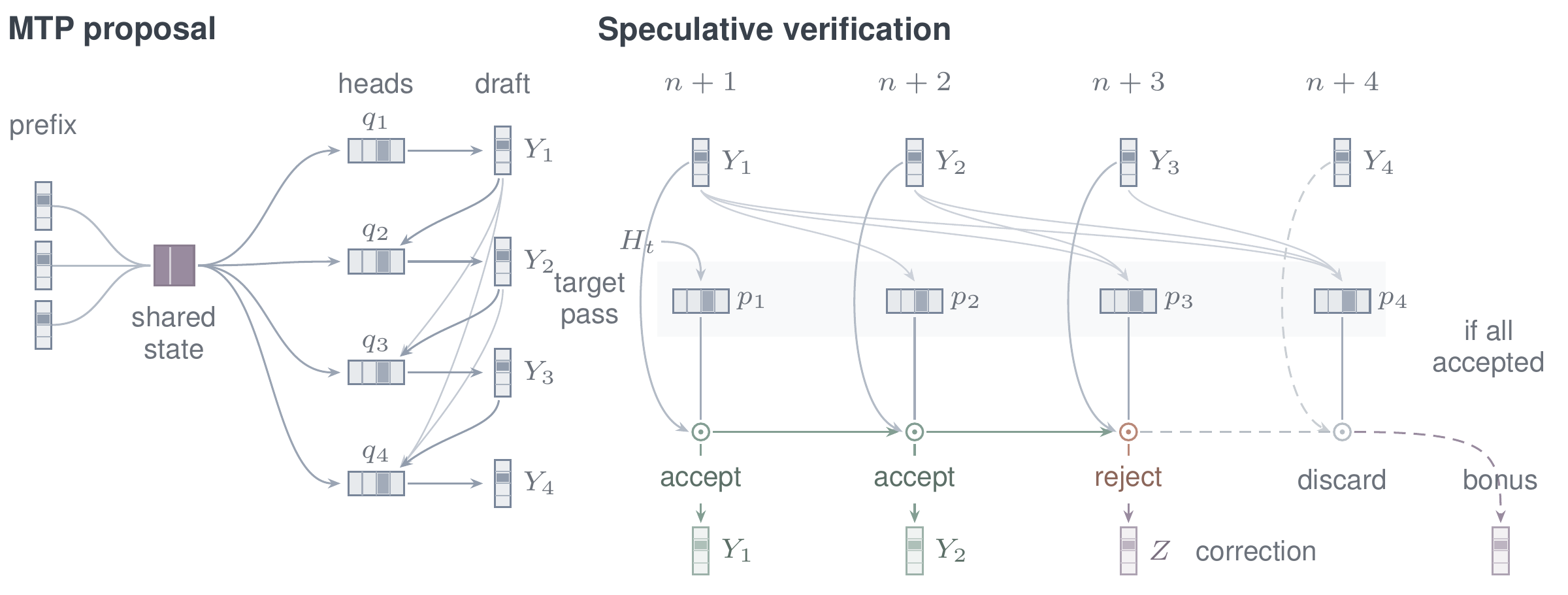}
\caption{\textbf{MTP with exact verification.} Proposal heads condition on earlier sampled tokens. One target pass scores $p_j(\cdot| H_t,Y_{<j})$, and circles denote prefix-ordered acceptance. Rejection emits a residual correction $Z$. The dashed bonus branch requires full acceptance.}
\label{fig:mtp-speculative}
\end{figure}

The horizon (or action) $k_t\in\{1,\dots,K\}$ is chosen \emph{before} the current draft is sampled. Moreover, every horizon truncates the same causal law $Q_t$, so changing $k$ does not change the distribution of earlier draft tokens: it only decides how many of them are drafted and sent to the verifier.
 A round proceeds as in Figure~\ref{fig:mtp-speculative}: the drafter samples $Y_{1:k}\sim Q_t^{(k)}$, and a single target pass scores all $k$ positions. Exact speculative verification \citep{leviathan2023fast,chen2023accelerating} then checks the draft from left to right, accepting $Y_j$ with probability
\begin{equation}\alpha_{t,j}\coloneqq \min\left\{1,\frac{p_{t,j}(Y_j| Y_{<j})}{q_{t,j}(Y_j| Y_{<j})}\right\},\end{equation}
where $q_{t,j}(Y_j| Y_{<j})>0$ on a sampled proposal. At the first rejection, the verifier draws a correction token from the normalized positive part of $p_{t,j}-q_{t,j}$. If all $k$ proposals are accepted, it draws a bonus token from the target conditional at the end of the draft. So, a round with horizon $k$ commits $L_t(k)=1+A_t(k)$ tokens, where $A_t(k)$ is the length of the accepted draft prefix, and the committed tokens follow the target law exactly \citep{leviathan2023fast,chen2023accelerating}. We analyze a fixed output-token budget where truncating the final output preserves its target law.\looseness=-1

\textbf{Exactness requires choosing the horizon before drafting.}
\label{sec:setup-exactness}
Exactness continues to hold when $k_t$ is chosen adaptively, provided it depends only on $\mathcal{F}_t$ and on randomness independent of the current draft. Earlier acceptance decisions depend only on earlier proposals and independent coins, so the verifier reaches position $j$ with the proposal still distributed as $q_{t,j}(\cdot | Y_{<j})$, and the standard one-step argument applies at every position and every round, however poorly the horizon is chosen. However, choosing the horizon \emph{after} seeing the draft can break this, even for a perfect drafter. To see this, let $p = q$ be independent fair bits, draw $Y_1, Y_2$, and keep $k = 2$ if $Y_2 = 1$ and $k = 1$ otherwise. Every drafted token is accepted, so the second emitted token is $Y_2 = 1$ on the long branch and a new fair bonus token on the short one. It equals $1$ with probability $3/4$ rather than $1/2$. AMTP therefore fixes $k_t$ before sampling the draft. Sampling transformations such as temperature or top-$p$ truncation must be built into the $p$ and $q$ the verifier uses.

Moreover, how likely a draft token is to pass depends on how far the proposal is from the target at that position. For a proposed prefix $u=y_{<j}$, we can measure this by the local discrepancy
\begin{equation}
 e_{t,j}(u)\coloneqq \mathrm{TV}\bigl(p_{t,j}(\cdot| u),q_{t,j}(\cdot| u)\bigr)
 =\tfrac12\|p_{t,j}(\cdot| u)-q_{t,j}(\cdot| u)\|_1,
 \label{eq:local-tv}
\end{equation}
which is exactly the probability that position $j$ is rejected given that the verifier reaches it along prefix $u$:  equivalently, $\mathbb E[\alpha_{t,j}|Y_{<j}=u,\mathcal F_t]=1-e_{t,j}(u)$.

Importantly, accurate token marginals do not guarantee an accurate multi-token proposal. For instance (as in Appendix~\ref{app:motivation-proofs}), independent MTP heads can fit every position perfectly while missing dependencies among future tokens. The block KL divergence splits into a marginal fitting error and a dependence term that remains even for exact marginals. In a simple repeated-bit example, the block mismatch approaches its maximum while the expected number of committed tokens per round stays below three for every horizon, so longer windows add a verification cost with little additional token gain. The horizon should therefore be chosen from how long draft prefixes actually survive verification, and from what each round costs, rather than from marginal prediction accuracy alone.\looseness=-1

\subsection{Survival and the Rejection Hazard}
\label{sec:setup-hazard}

We now make the the notion of ``how long draft prefixes survive'' precise, and formally define the acceptance survival and rejection hazard. For a virtual full draft $Y\sim Q_t$, we define the prefix acceptance weight via $W_{t,0}=1$ and $W_{t,j}=\prod_{r=1}^j \alpha_{t,r}$, which is the probability that the verifier accepts all of $Y_{1:j}$ given the sampled tokens.
\begin{definition}[Acceptance survival] The acceptance survival is the probability of accepting at least $j$ draft tokens, conditional on $\mathcal F_t$, and is given by
$s_{t,j}=\mathbb E_{Q_t}[W_{t,j}]$ with $s_{t,0}=1$.
\end{definition}

\begin{definition}[Rejection hazard] The rejection hazard at position $j$ is the probability of rejection there, given that the preceding $j-1$ proposals were accepted, and is given (conditional on $\mathcal F_t$) by \begin{equation}h_{t,j}=\frac{\E_{Q_t}[W_{t,j-1} e_{t,j}(Y_{<j})]}{s_{t,j-1}}.\label{eq:mtp-hazard}\end{equation}
\end{definition}
When $s_{t,j-1}=0$, we set $h_{t,j}=0$, as later survival probabilities are zero regardless of this convention. Since $\E[\alpha_{t,j}|Y_{<j},\mathcal{F}_t]=1-e_{t,j}(Y_{<j})$, the survival and expected gain follow from the hazard:\looseness=-1
\begin{equation}
 s_{t,j}=\prod_{r=1}^j(1-h_{t,r}),
 \qquad
 G_t(k):=\E[L_t(k)|\mathcal{F}_t]=1+\sum_{j=1}^k s_{t,j},
 \label{eq:mtp-survival}
\end{equation}
where quantities indexed by an action refer to the potential outcome of choosing that horizon, even if the policy assigns it zero probability. To justify why hazards are the right prediction target, we point to Equation \ref{eq:mtp-survival}: a single vector $(h_{t,1}\dots,h_{t,K})$ determines the expected gain of all $K$ horizons at once, so a controller that predicts it can score every horizon before drafting. Indeed, the identities need no independence between draft positions, so they hold for correlated heads as well as independent ones.

\textbf{Why the hazard is survival-weighted.} The numerator of Equation \ref{eq:mtp-hazard} weights each prefix by its likelihood of surviving to position $j$. Replacing $h_{t,j}$ with the unweighted mean discrepancy $\E_{Q_t}[e_{t,j}(Y_{<j})]$ is generally incorrect since prefixes that survive verification are distributed differently from arbitrary draft prefixes. For instance, consider a binary vocabulary $\{a,b\}$. The first head proposes $a$ or $b$ with probability $1/2$ each, while the target places mass $3/4$ on $a$, so $\alpha_{t,1}=1$ on $a$, $\alpha_{t,2}=1/2$ on $b$, and $s_{t,1}=3/4$. Then, suppose the second head matches the target exactly after $a$ but has disjoint support from it after $b$, so $e_{t,2}(a)=0$ and $e_{t,2}(b)=1$. The unweighted mean discrepancy at position $2$ is $1/2$, but $\smash{h_{t,2}=\frac{1/4}{3/4} = 1/3}$, since a draft that starts with the poorly matched token $b$ is less likely to reach position $2$ at all. Here, the true gain is $\smash{G_2(2)=1+\tfrac34 +\tfrac12=2.25}$, and plugging this in the unweighted mean gives $2.125$. Section \ref{sec:controller} shows how to build this change of prefix distribution into the learning objective.

\section{Online Horizon Control}

\label{sec:controller}

Each round of speculative decoding poses a one-step control problem: before drafting, choose the horizon $k_t$ that maximizes throughput. If the rejection hazards were known, Section \ref{sec:formulation} would solve this problem exactly, since Equation \ref{eq:mtp-survival} gives the expected gain of every horizon. However, since they are not known,  AMTP uses certainty equivalence: it plugs in predicted hazards, acts on them, and updates the predictor via online gradient descent (OGD) from the feedback that verification provides. Algorithm \ref{alg:amtp} gives the full loop and Figure \ref{fig:ogd-controller} illustrates it.

\begin{algorithm}[hbt!]
\caption{Adaptive Multi-Token Prediction}
\label{alg:amtp}
\begin{algorithmic}[1]
\Require Maximum horizon $K$; parameter set $\Theta$ and $\theta_1 \in \Theta$; exploration rate $\varepsilon \in (0,1]$; step sizes $\eta_t$; initial cost model $\widehat C_1$
\For{each speculative round $t = 1, 2, \dots$}
  \State Observe pre-draft features $z_t$; predict hazards $\hat h_{\theta_t,t,j}$ and costs $\widehat C_t(j)$ for $j = 1,\dots,K$
  \State Set $\kappa_t \gets \arg\max_k \widehat G_t(k)/\widehat C_t(k)$ using~\eqref{eq:plugin} \Comment{plug-in control, $O(K)$}
  \State With probability $\varepsilon$ set $k_t \gets K$; otherwise $k_t \gets \kappa_t$ \Comment{exploration}
  \State Draft $Y_{1:k_t} \sim Q_t^{(k_t)}$ and verify all $k_t$ positions in one target pass
  \State Commit the accepted prefix and one correction or bonus token
  \State For $j \le k_t$: record $e^{\mathrm{obs}}_{t,j} = e_{t,j}(Y_{<j})$, $W_{t,j-1} = \prod_{r<j} \alpha_{t,r}$ and $\pi_{t,j} = (1-\varepsilon)\mathbf{1}\{j \le \kappa_t\} + \varepsilon$
  \State Update $\theta_{t+1} \gets \Pi_\Theta\bigl(\theta_t - \eta_t \nabla \hat\ell_t(\theta_t)\bigr)$ with $\hat\ell_t$ from~\eqref{eq:mtp-population-loss} \Comment{OGD}
  \State Record the measured round cost $c_t$ and update the cost model $\widehat C_{t+1}$
\EndFor
\end{algorithmic}
\end{algorithm}

\textbf{The Control Problem.} Let $C_t(k)$ be the conditional expected cost of a round with horizon $k$, including drafting, verification, synchronization and all controller overhead, with $0 < c_{\min} \le C_t(k) \le c_{\max} < \infty$. We compare horizons at the same learner state and under the same update protocol, where the controller's objective is its throughput, and the optimal horizon at the current history is
\begin{equation}
\label{eq:throughput}
R_t(k) = \frac{G_t(k)}{C_t(k)},
\qquad
k_t^\star \in \arg\max_{1 \le k \le K} R_t(k).
\end{equation}
AMTP predicts the rejection hazards $\hat h_{\theta_t,t,j} \in [0,1]$ from pre-draft features $z_t$ and position $j$, and estimates rounds costs $\widehat C_t(k) > 0$, where the predictions are $\mathcal F_t$-measurable. It converts the hazard predictions into survival probabilities and expected token gains and selects a candidate horizon:\looseness=-1
\begin{equation}
\label{eq:plugin}
\hat s_{t,j} = \prod_{r=1}^{j} \bigl(1 - \hat h_{\theta_t,t,r}\bigr),
\qquad
\widehat G_t(k) = 1 + \sum_{j=1}^{k} \hat s_{t,j},
\qquad
\kappa_t \in \arg\max_{1 \le k \le K} \frac{\widehat G_t(k)}{\widehat C_t(k)},
\end{equation}
with a fixed tie-breaking rule. A running product and a running sum score all $K$ horizons in $O(K)$ time, and the controller does not assume that throughput is unimodal in $k$, and so the only part of Equation \ref{eq:plugin} that must be learned is the hazard forecaster $\theta_t$.

\subsection{Learning the hazards via Online Gradient Descent (OGD)}

OGD needs a loss whose error is reflective of the error attributed to the control problem. Following the change of prefix distribution in Section~\ref{sec:setup-hazard}, we use the survival-weighted population  loss
\begin{equation}
\ell_t(\theta) = \mathbb{E}_{Q_t}\left[\sum_{j=1}^{K} W_{t,j-1} \bigl(e_{t,j}(Y_{<j}) - \hat h_{\theta,t,j}\bigr)^2\right],\label{eq:mtp-population-loss}
\end{equation}
conditional on $\mathcal{F}_t$, with the proposal law and features held fixed. Let $\ell_t^{B}$ be the same expression with the true hazards $h_{t,j}$ in place of $\hat h_{\theta,t,j}$. Then, $\ell_t^B$ represents the minimum loss over unrestricted pre-draft hazard predictions. We then show the following calibration identity:
 
\begin{lemma}[Calibration identity]
\label{lem:mtp-calibration}
For every $\mathcal{F}_t$-measurable parameter $\theta$,
\begin{equation}
\mathcal{E}_t(\theta) := \ell_t(\theta) - \ell_t^{B} = \sum_{j=1}^{K} s_{t,j-1} \bigl(\hat h_{\theta,t,j} - h_{t,j}\bigr)^2.
\label{eq:mtp-calibration}
\end{equation}
\end{lemma}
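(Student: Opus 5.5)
The plan is a per-position bias–variance decomposition. It uses only the facts that $\hat h_{\theta,t,j}$ and $h_{t,j}$ are $\mathcal F_t$-measurable, so they are constants under $\mathbb E_{Q_t}[\cdot]$ conditional on $\mathcal F_t$, and that $W_{t,j-1}\ge 0$ with $\mathbb E_{Q_t}[W_{t,j-1}]=s_{t,j-1}$ by the definition of acceptance survival.

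Fix $j$ and abbreviate $e=e_{t,j}(Y_{<j})$, $W=W_{t,j-1}$, $a=\hat h_{\theta,t,j}$, $b=h_{t,j}$. Write $e-a=(e-b)+(b-a)$ and expand the square:
\begin{equation*}
W(e-a)^2 = W(e-b)^2 + 2W(e-b)(b-a) + W(b-a)^2 .
\end{equation*}
Taking $\mathbb E_{Q_t}$ conditional on $\mathcal F_t$, the first term is exactly the $j$-th summand of $\ell_t^B$. The last term equals $(b-a)^2\,\mathbb E_{Q_t}[W]=s_{t,j-1}(a-b)^2$, because $a,b$ are constants.

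The key step is showing that the cross term vanishes. Since $b-a$ is constant, it suffices to show $\mathbb E_{Q_t}[W(e-b)]=0$, that is, $\mathbb E_{Q_t}[W e]=b\,s_{t,j-1}$. When $s_{t,j-1}>0$ this is just a rearrangement of the definition of the rejection hazard in Equation~\ref{eq:mtp-hazard}. When $s_{t,j-1}=0$, nonnegativity gives $W=0$ almost surely under $Q_t$. Then $\mathbb E[We]=0$ as well, since $0\le e\le 1$, and the identity holds under the convention $h_{t,j}=0$; in this case the $j$-th term contributes $0$ on both sides.

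Summing over $j=1,\dots,K$ and subtracting $\ell_t^B$ yields Equation~\ref{eq:mtp-calibration}. All expectations are finite because $W\in[0,1]$, $e\in[0,1]$ and $a,b\in[0,1]$.

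The remark that $\ell_t^B$ is the minimum over unrestricted pre-draft predictions follows from the same computation, since the right-hand side is nonnegative. The main obstacle is the measurability bookkeeping. It must be clear that $\theta$ is fixed given $\mathcal F_t$ and does not depend on the current draft, so that $a$ can be pulled out of the expectation. This is exactly where the pre-draft restriction of Section~\ref{sec:setup-exactness} enters.
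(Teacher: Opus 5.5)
Your proposal is correct and follows the paper's proof: expand each weighted square around $h_{t,j}$, the cross term vanishes by the hazard definition in Equation~\ref{eq:mtp-hazard}, the first squares sum to $\ell_t^{B}$, and the last term picks up $\mathbb{E}_{Q_t}[W_{t,j-1}]=s_{t,j-1}$. You also treat the degenerate case $s_{t,j-1}=0$ explicitly, where $W_{t,j-1}=0$ almost surely, which the paper leaves implicit.
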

 
Expanding the weighted square around $h_{t,j}$ yields the
identity, and the cross term vanishes by the hazard definition in (\ref{eq:mtp-hazard}); so, the loss is minimized by the true hazards, and its excess is a hazard error weighted by how often each position is reached: it is weighted less on positions that are less likely to survive verification. 
Section \ref{sec:guarantees} shows that this excess controls the throughput lost.

\begin{figure}[t]
\centering
\includegraphics[width=0.9\linewidth]{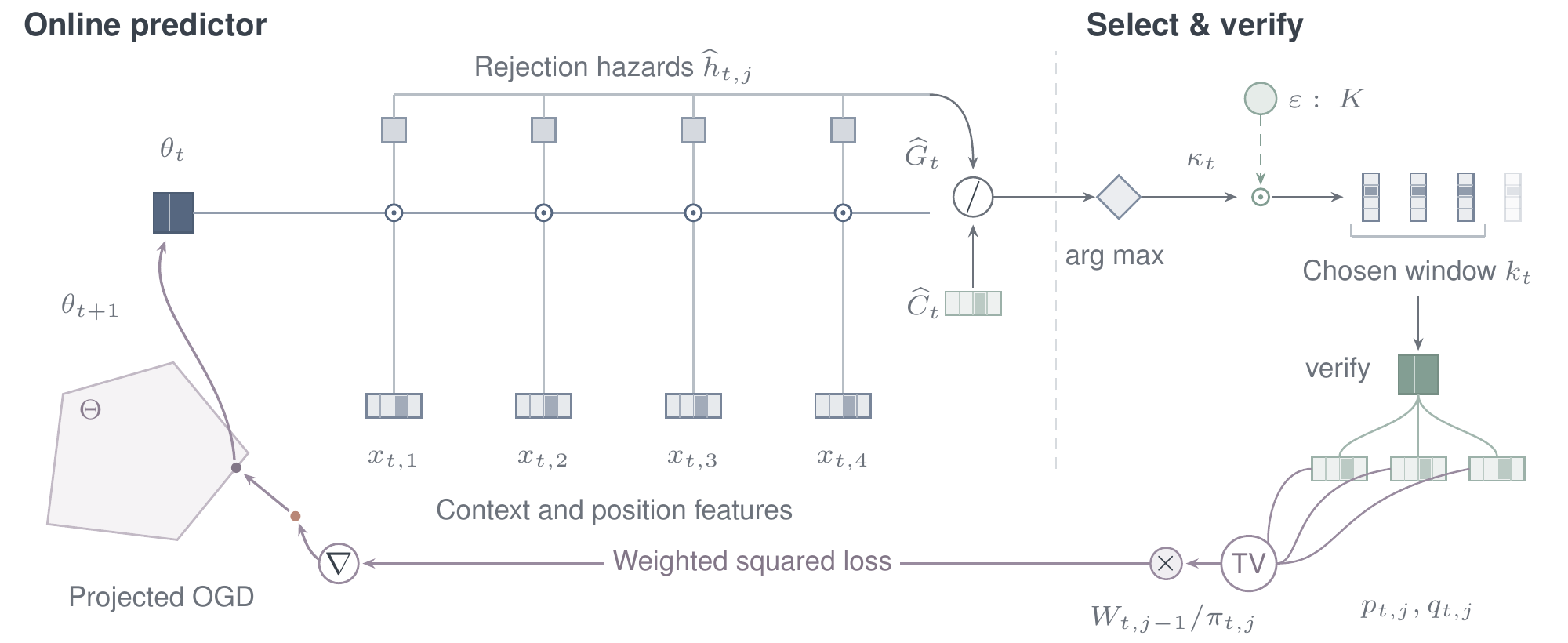}
\caption{\textbf{AMTP's online control loop.}  Before drafting, it picks the horizon with the highest predicted throughput. Verification feedback at every drafted position, including those after a rejection, enters the weighted loss in Equation \ref{eq:mtp-population-loss}, and an OGD step updates the hazard predictor.}
\label{fig:ogd-controller}
\end{figure}

\textbf{The ideal learner.}
If verification revealed $e^{\mathrm{obs}}_{t,j} = e_{t,j}(Y_{<j})$ and $W_{t,j-1}$ at all $K$ positions of a draft $Y \sim Q_t$, then the realized loss $\smash{\tilde\ell_t(\theta) = \sum_{j=1}^{K} W_{t,j-1} (e^{\mathrm{obs}}_{t,j} - \hat h_{\theta,t,j})^2}$ would be unbiased for $\ell_t(\theta)$, and the controller would simply run projected OGD \citep{zinkevich2003online,hazan2016oco}:
\begin{equation}
\label{eq:ogd}
\theta_{t+1} = \Pi_\Theta\bigl(\theta_t - \eta\, \nabla \tilde\ell_t(\theta_t)\bigr),
\end{equation}
where $\Pi_\Theta(\cdot)$ is the projection onto the (convex) set of parameters $\Theta$. For an affine forecaster $\smash{\hat h_{\theta,t,j} = \langle \theta, x_{t,j} \rangle}$, the gradient is $\smash{-2 \sum_{j} W_{t,j-1} (e^{\mathrm{obs}}_{t,j} - \langle \theta, x_{t,j} \rangle) x_{t,j}}$. This learner always acts greedily, and Section~\ref{sec:guarantees} shows that its throughput regret is sublinear in the number of rounds. However, this is an idealization as the discrepancy $e_{t,j}$ compares the draft with the target distribution $p_{t,j}$, and the target computes $p_{t,j}$ only at positions it verifies. So, a round that drafts $k_t < K$ tokens gives no feedback at positions $k_t+1,\dots,K$, and observing every position at each round would mean drafting and verifying the full window each time, which defeats the purpose of choosing the horizon. AMTP avoids this: it drafts the full window only in a small random fraction of rounds and re-weights the feedback so that the loss stays unbiased (Section~\ref{sec:controller}).

\subsection{Learning from partial feedback}

A round with horizon $k_t$ reveals feedback only at positions $1,\dots,k_t$. If a greedy controller underestimates late-position survival, its drafts might be short, it may never observe late positions, and it might never corrects the estimate. To address this, AMTP makes two changes to the ideal learner.

\textbf{Exploration.}
With probability $1-\varepsilon$ the controller drafts $k_t = \kappa_t$, and with probability $\varepsilon$ it drafts the full window $k_t = K$ \citep{langford2007epoch}. The randomness here is independent of the draft given $\mathcal{F}_t$, so position $j$ is observed with known probability $\pi_{t,j} = (1-\varepsilon)\mathbf{1}\{j \le \kappa_t\} + \varepsilon \ge \varepsilon$. Other exploration schemes, such as Thompson sampling for contextual bandits \citep{zhang2021feelgoodthompsonsamplingcontextual,anand2025feelgood}, could replace uniform full-horizon exploration. We use $\varepsilon$-greedy exploration because it gives known inclusion probabilities, which is needed for inverse-probability weighting.

\textbf{Inverse-probability weighting.} After verification,  $e^{\mathrm{obs}}_{t,j}$ and $W_{t,j-1}$ are available for every $j \le k_t$. Then, reweighting each observed term by its inclusion probability gives
\begin{equation}
\label{eq:ipw-loss}
\hat\ell_t(\theta) = \sum_{j=1}^{k_t} \frac{W_{t,j-1}}{\pi_{t,j}} \bigl(e^{\mathrm{obs}}_{t,j} - \hat h_{\theta,t,j}\bigr)^2,
\qquad
\mathbb{E}[\hat\ell_t(\theta) | \mathcal{F}_t] = \ell_t(\theta).
\end{equation}
AMTP runs~(\ref{eq:ogd}) with $\hat\ell_t$ in place of $\smash{\tilde\ell_t}$, holding $\smash{W_{t,j-1}}$ and $\smash{\pi_{t,j}}$ fixed in line~8 of Alg~\ref{alg:amtp}. However, relative to the ideal learner, exploration costs some throughput in rounds drafting $K$, and the factor $\smash{1/\pi_{t,j} \le 1/\varepsilon}$ makes gradients noisier: we quantify both in Section~\ref{sec:guarantees}.\looseness=-1

\section{From Prediction Error to Decoding Efficiency}
\label{sec:guarantees}

We now ask how much decoding efficiency AMTP loses by acting on predicted rather than true hazards. First, we show that, for \emph{any} online learner, the throughput regret against the best horizon at each visited history is controlled by the learner's excess prediction loss. We then bound that loss for OGD, first for the ideal full-feedback learner and then for AMTP's partial-feedback learner, and then we bound the price of partial feedback.  Finally, we show how to convert the throughput regret into bounds on expected completion time and speedup. We give complete proofs in Appendix~\ref{app:proofs}.

\subsection{Prediction error to throughput regret}
 
We allow the controller to maximize predicted throughput only approximately, with a selection error $\xi_t \ge 0$ that accommodates exploration:
\begin{equation}
\label{eq:approx-select}
\widehat R_t(k) = \frac{\widehat G_t(k)}{C_t(k)},
\qquad
\widehat R_t(k_t) \ge \max_{1 \le k \le K} \widehat R_t(k) - \xi_t .
\end{equation}
For a comparator sequence $\theta^\star_{1:T}$, define
\[
\mathcal{R}_T = \sum_{t=1}^{T} \bigl(\ell_t(\theta_t) - \ell_t(\theta_t^\star)\bigr),
\qquad
\mathcal{A}_T = \sum_{t=1}^{T} \bigl(\ell_t(\theta_t^\star) - \ell_t^{B}\bigr),
\qquad
\mathcal{D}_T = \sum_{t=1}^{T} \bigl(R_t(k_t^\star) - R_t(k_t)\bigr).
\]
Here $\mathcal{R}_T$ is the learner's regression regret against the comparator, $\mathcal{A}_T \ge 0$ is the comparator's approximation error, and $\mathcal{D}_T$ is the throughput regret against the best horizon at each history the learner visits. This comparator is stronger than the best fixed horizon: it may change from round to round with the context. By Lemma \ref{lem:mtp-calibration}, $\mathcal{R}_T + \mathcal{A}_T = \sum_{t=1}^{T} \mathcal{E}_t(\theta_t) \ge 0$. Write $\Lambda_K = c_{\min}^{-1} \sqrt{\frac{K(K+1)(2K+1)}{6}}$.
 
\begin{theorem}[Dynamic regret transfer]
\label{thm:transfer}
Under the preceding assumptions, for every comparator sequence and every realization of the pre-round histories and selected horizons, we have
\begin{equation}
\label{eq:transfer}
\mathcal{D}_T \le 2\Lambda_K \sqrt{T\,(\mathcal{R}_T + \mathcal{A}_T)} + \sum_{t=1}^{T} \xi_t .
\end{equation}
\end{theorem}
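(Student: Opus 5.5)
The plan is to bound the per-round throughput gap by a hazard-prediction error, then sum over rounds with Cauchy--Schwarz and invoke Lemma~\ref{lem:mtp-calibration}. Fix a round $t$. The selection rule~\eqref{eq:approx-select} uses the true costs $C_t(k)$ in the denominator, so the comparison stays within a single cost scale. Using $\widehat R_t(k_t) \ge \widehat R_t(k_t^\star) - \xi_t$, I will decompose
\[
R_t(k_t^\star) - R_t(k_t) = \bigl(R_t(k_t^\star) - \widehat R_t(k_t^\star)\bigr) + \bigl(\widehat R_t(k_t^\star) - \widehat R_t(k_t)\bigr) + \bigl(\widehat R_t(k_t) - R_t(k_t)\bigr)
\le 2\max_{k}\bigl|R_t(k)-\widehat R_t(k)\bigr| + \xi_t .
\]
Since $C_t(k)\ge c_{\min}$, each term satisfies $|R_t(k)-\widehat R_t(k)| \le c_{\min}^{-1}|G_t(k)-\widehat G_t(k)| \le c_{\min}^{-1}\sum_{j=1}^{K}|s_{t,j}-\hat s_{t,j}|$.

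The key step, and the one I expect to be the main obstacle, is bounding the survival error by the survival-weighted hazard error $\mathcal{E}_t(\theta_t)$. The weights there are $s_{t,r-1}$ rather than $\hat s$, so the telescoping has to be arranged carefully. I will telescope the product difference as
\[
s_{t,j}-\hat s_{t,j} = \sum_{r=1}^{j}\Bigl(\prod_{i<r}(1-h_{t,i})\Bigr)\bigl(\hat h_{\theta_t,t,r}-h_{t,r}\bigr)\Bigl(\prod_{r<i\le j}(1-\hat h_{\theta_t,t,i})\Bigr).
\]
This ordering puts the true survival $s_{t,r-1}$ on the left and a factor in $[0,1]$ on the right. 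It gives $|s_{t,j}-\hat s_{t,j}| \le \sum_{r\le j} s_{t,r-1}|\hat h_{\theta_t,t,r}-h_{t,r}|$.

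Next I apply Cauchy--Schwarz, using $s_{t,r-1}\le \sqrt{s_{t,r-1}}$ because $s\in[0,1]$. For each $j$ this gives $\sum_{r\le j}\sqrt{s_{t,r-1}}\,|\cdot| \le \sqrt{j}\,\sqrt{\mathcal{E}_t(\theta_t)}$ by Lemma~\ref{lem:mtp-calibration}. Summing over $j\le K$ yields $\sum_j\sqrt{j}$. This is at most $\sqrt{K\sum_j j}$, and a cruder bound gives $\sqrt{K(K+1)(2K+1)/6}$, namely $\sum_j \sqrt{j}\le \sqrt{K}\sqrt{\sum_j j}\le\sqrt{\sum_j j^2}\cdot$(check constants). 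If the constant must match exactly, I would instead use $\sum_{j}\sqrt{j}\le\sqrt{\sum_j j^2}$, which holds because $\bigl(\sum_j\sqrt j\bigr)^2\le K\sum_j j\le \sum_j j^2\cdot\tfrac{3K}{2K+1}\cdot\ldots$. Failing that, I would swap the order of summation to get $\sum_r (K-r+1)\, s_{t,r-1}|\cdot|$ and apply Cauchy--Schwarz with weights $(K-r+1)^2$. That route gives exactly $\sqrt{\sum_{m=1}^K m^2}\sqrt{\mathcal{E}_t}$, which is the cleanest way to produce $\Lambda_K$. So the per-round bound is $R_t(k_t^\star)-R_t(k_t)\le 2\Lambda_K\sqrt{\mathcal{E}_t(\theta_t)}+\xi_t$.

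Finally, I sum over $t$ and apply Cauchy--Schwarz in time: $\sum_t\sqrt{\mathcal{E}_t(\theta_t)}\le\sqrt{T\sum_t\mathcal{E}_t(\theta_t)}=\sqrt{T(\mathcal{R}_T+\mathcal{A}_T)}$. This uses the telescoped identity $\mathcal{R}_T+\mathcal{A}_T=\sum_t(\ell_t(\theta_t)-\ell_t^B)$ noted before the theorem. Everything above holds pathwise, since each step only uses $\mathcal F_t$-measurability of $\theta_t$ and the conditional definitions. That gives the claimed bound for every realization.
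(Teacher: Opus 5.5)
Your final route is the paper's proof: telescope with the true survival $s_{t,r-1}$ on the left and a $[0,1]$ factor on the right, swap the order of summation to get weights $(k-r+1)s_{t,r-1}$, apply Cauchy--Schwarz with Lemma~\ref{lem:mtp-calibration} to obtain $\Lambda_K\sqrt{\mathcal{E}_t(\theta_t)}$, then use the three-term decomposition via Equation~\ref{eq:approx-select} and Cauchy--Schwarz over rounds. Delete the per-position $\sqrt{j}$ detour, though: the inequality $\sum_{j\le K}\sqrt{j}\le\sqrt{\sum_{j\le K}j^2}$ you gesture at is false for every $K\ge 2$ (already $1+\sqrt{2}>\sqrt{5}$), so that route only gives a weaker constant, and only the swapped-summation argument produces $\Lambda_K$.
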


The key idea is to connect calibration to throughput prediction, and then to horizon selection. Telescoping the survival products and applying Lemma~\ref{lem:mtp-calibration} bounds the error in predicted throughput uniformly over horizons. Combining this with Equation \ref{eq:approx-select} bounds the per-round regret:
\[
\max_{1 \le k \le K} \bigl|\widehat R_t(k) - R_t(k)\bigr| \le \Lambda_K \sqrt{\mathcal{E}_t(\theta_t)},
\qquad
R_t(k_t^\star) - R_t(k_t) \le 2\Lambda_K \sqrt{\mathcal{E}_t(\theta_t)} + \xi_t .
\]
Summing over rounds and applying Cauchy--Schwarz gives Equation \ref{eq:transfer}. A regression guarantee therefore becomes a throughput guarantee through the learner's total excess prediction loss, including the comparator's approximation error. Importantly, this does not depend on how $\theta_t$ is produced, which allows us to use any online learning algorithm and transfer their guarantees to our setting.
 
\begin{remark}[Sharpness] The square-root dependence in Eq. \ref{eq:transfer} cannot be improved uniformly. Prop. \ref{prop:mtp-sharpness} with independent proposals and known costs incurs one-round throughput regret $\frac25\sqrt{\mathcal{E}_t(\theta_t)}$.
\end{remark}
 
\begin{remark}[Estimated costs]
Suppose instead that the controller uses $\widehat C_t(k) \ge c_{\min}$, and let $\delta^C_t = \max_{1 \le k \le K} |\widehat C_t(k) - C_t(k)|$. Then, measuring $\xi_t$ relative to $\widehat G_t(k)/\widehat C_t(k)$ gives
\begin{equation}
\label{eq:cost-error}
\mathcal{D}_T \le 2\Lambda_K \sqrt{T\,(\mathcal{R}_T + \mathcal{A}_T)} + \sum_{t=1}^{T} \xi_t + \frac{2(K+1)}{c_{\min}^2} \sum_{t=1}^{T} \delta^C_t .
\end{equation}
\end{remark}

\subsection{Learning the hazards with OGD}

We now bound $\mathcal{R}_T$ for the two learners of Section~\ref{sec:controller}. Both use an affine forecaster $\hat h_{\theta,t,j} = \langle \theta, x_{t,j} \rangle$ on a nonempty closed convex set $\Theta$ of diameter at most $D$. The features are $\mathcal{F}_t$-measurable with $\|x_{t,j}\|_2 \le F$, and forecasts lie in $[0,1]$ for all $\theta \in \Theta$. Initialize $\theta_1 \in \Theta$. The comparator $\theta^\star_t \in \Theta$ is $\mathcal{F}_t$-measurable, and its path length is bounded almost surely by a deterministic quantity, $\sum_{t=1}^{T-1} \|\theta^\star_{t+1} - \theta^\star_t\|_2 \le P_T$, as is standard for dynamic regret \citep{zinkevich2003online,besbes2015nonstationary,zhang2018adaptive}. We assume known costs throughout this subsection.
 
\paragraph{Ideal full feedback.}
The ideal learner of Section~\ref{sec:controller} observes all $K$ positions and always acts greedily, so $\xi_t = 0$. Its gradients $\nabla\tilde\ell_t(\theta_t)$ are unbiased for $\nabla\ell_t(\theta_t)$ and satisfy $\|\nabla\tilde\ell_t(\theta_t)\|_2 \le 2FK$.
 
\begin{corollary}[OGD with \textbf{full} feedback]
\label{cor:ogd-full}
Under these conditions, projected OGD (Equation \ref{eq:ogd}) on the full-feedback loss with constant step size $\eta > 0$ satisfies
\begin{equation}
\label{eq:ogd-full}
\mathbb{E}\,\mathcal{R}_T \le \frac{D^2 + 2DP_T}{2\eta} + 2\eta F^2 K^2 T .
\end{equation}
Under realizability, optimizing $\eta$, for fixed $K, c_{\min}, D, F$, gives $\mathbb{E}\,\mathcal{D}_T = O\bigl(T^{3/4} (1+P_T)^{1/4}\bigr)$.
\end{corollary}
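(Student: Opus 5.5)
The plan is the standard dynamic-regret analysis of projected OGD (Zinkevich), run with stochastic gradients. The losses $\ell_t$ are convex in $\theta$: for the affine forecaster, each summand $W_{t,j-1}(e_{t,j}-\langle\theta,x_{t,j}\rangle)^2$ is a convex quadratic, and nonnegative weights and expectation preserve convexity. Let $g_t=\nabla\tilde\ell_t(\theta_t)$. Because $\theta_t$ is $\mathcal F_t$-measurable and $\tilde\ell_t$ is unbiased for $\ell_t$ conditionally on $\mathcal F_t$, we have $\mathbb E[g_t\mid\mathcal F_t]=\nabla\ell_t(\theta_t)$. The gradient bound follows from $W_{t,j-1}\in[0,1]$, $|e^{\mathrm{obs}}_{t,j}-\hat h|\le 1$ (both lie in $[0,1]$) and $\|x_{t,j}\|\le F$:
\[
\|g_t\|\le\sum_{j=1}^K 2F=2FK .
\]

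\textbf{Per-round inequality.} Nonexpansiveness of $\Pi_\Theta$ (with $\theta^\star_t\in\Theta$) gives
\[
\|\theta_{t+1}-\theta_t^\star\|^2\le\|\theta_t-\theta^\star_t\|^2-2\eta\langle g_t,\theta_t-\theta_t^\star\rangle+\eta^2\|g_t\|^2 .
\]
Rearranging,
\[
\langle g_t,\theta_t-\theta^\star_t\rangle\le\frac{\|\theta_t-\theta^\star_t\|^2-\|\theta_{t+1}-\theta^\star_t\|^2}{2\eta}+2\eta F^2K^2 .
\]
Since $\theta^\star_t$ is also $\mathcal F_t$-measurable, the tower property turns the left side into $\mathbb E\langle\nabla\ell_t(\theta_t),\theta_t-\theta_t^\star\rangle$. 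By convexity this is at least $\mathbb E[\ell_t(\theta_t)-\ell_t(\theta^\star_t)]$.

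\textbf{Telescoping with a moving comparator.} Write
\[
\|\theta_{t+1}-\theta^\star_t\|^2=\|\theta_{t+1}-\theta^\star_{t+1}\|^2+\bigl(\|\theta_{t+1}-\theta^\star_t\|^2-\|\theta_{t+1}-\theta^\star_{t+1}\|^2\bigr).
\]
The bracketed difference of squares is $\langle\theta^\star_{t+1}-\theta^\star_t,\,2\theta_{t+1}-\theta^\star_t-\theta^\star_{t+1}\rangle$, so its absolute value is at most $2D\|\theta^\star_{t+1}-\theta^\star_t\|$. Summing over $t$, the distances telescope to at most $\|\theta_1-\theta_1^\star\|^2\le D^2$, and the drift terms sum to at most $2DP_T$ by the almost-sure path-length bound. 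This yields \eqref{eq:ogd-full}. The main obstacle is exactly this bookkeeping: conditioning must be applied before convexity, and the comparator shift must be absorbed deterministically. Both work because $\theta_t^\star$ is $\mathcal F_t$-measurable and $P_T$ is a deterministic bound.

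\textbf{Rate.} Choosing $\eta\propto\sqrt{(D^2+2DP_T)/(F^2K^2T)}$ gives $\mathbb E\,\mathcal R_T=O(\sqrt{T(1+P_T)})$. Under realizability, take $\theta^\star_t$ attaining the true hazards, so $\mathcal A_T=0$. With full feedback and greedy selection, $\xi_t=0$. Theorem~\ref{thm:transfer} then gives
\[
\mathcal D_T\le 2\Lambda_K\sqrt{T\,\mathcal R_T}.
\]
Here $\mathcal R_T=\sum_t\mathcal E_t(\theta_t)\ge0$ by Lemma~\ref{lem:mtp-calibration}. Taking expectations and applying Jensen to the concave square root,
\[
\mathbb E\,\mathcal D_T\le2\Lambda_K\sqrt{T\,\mathbb E\,\mathcal R_T}=O\bigl(\sqrt{T\cdot T^{1/2}(1+P_T)^{1/2}}\bigr)=O\bigl(T^{3/4}(1+P_T)^{1/4}\bigr).
\]
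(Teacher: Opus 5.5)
Your proposal is correct and follows essentially the same route as the paper. The paper first proves a general projected-OGD lemma with a moving comparator and conditionally unbiased gradients of second moment $M^2$, obtaining $\frac{D^2+2DP_T}{2\eta}+\frac{\eta M^2T}{2}$. It then instantiates $M=2FK$ and combines Theorem~\ref{thm:transfer} (with $\mathcal{A}_T=0$, $\xi_t=0$) with Jensen's inequality, which is exactly what you do inline.
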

 
Equation~\ref{eq:ogd} is the standard projected-OGD argument with a moving comparator. Theorem~\ref{thm:transfer} and Jensen's inequality then give the final rate. We give the result for any unbiased gradient with bounded second moment in Equation~\ref{eq:ogd}. As Section~\ref{sec:controller} explains, this learner needs the target distribution at all $K$ positions in every round, so this is only our benchmark for an implementable controller.\looseness=-1
 
\textbf{Partial feedback.}
AMTP replaces $\tilde\ell_t$ with the observed loss. Here, the gradients remain unbiased, but each observed term is scaled by $\smash{\tfrac{1}{\pi_{t,j}} \le \tfrac{1}{\varepsilon}}$, so for $g_t = \nabla\hat\ell_t(\theta_t)$ we have $\mathbb{E}[\|g_t\|_2^2 | \mathcal{F}_t] \le \frac{4F^2K^2}{\varepsilon}$, where exploration rounds contribute $\mathbb{E}\sum_{t=1}^{T} \xi_t \le \varepsilon T \frac{K+1}{c_{\min}}$ to the selection error.\looseness=-1

\begin{corollary}[OGD with \textbf{partial} feedback]
\label{cor:ogd-partial}
Under these conditions, Algorithm~\ref{alg:amtp} with known costs, exploration rate $\varepsilon \in (0,1]$ and constant step size $\eta > 0$ satisfies
\begin{equation}
\label{eq:ogd-partial}
\mathbb{E}\,\mathcal{R}_T \le \frac{D^2 + 2DP_T}{2\eta} + \frac{2\eta F^2 K^2 T}{\varepsilon} .
\end{equation}
Under realizability, optimizing $\eta$, for fixed $K, c_{\min}, D, F$, gives $\mathbb{E}\,\mathcal{D}_T
= O\bigl(T^{4/5} (1+P_T)^{1/5}\bigr)$.
\end{corollary}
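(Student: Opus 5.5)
The argument has two parts. The first is the projected-OGD regret bound against a moving comparator, using stochastic but conditionally unbiased gradients. The second plugs that bound into Theorem~\ref{thm:transfer} together with the exploration cost.

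\textbf{Regret bound.} Write $g_t=\nabla\hat\ell_t(\theta_t)$. Three facts drive the bound.
\begin{itemize}
\item By \eqref{eq:ipw-loss} and the independence of the exploration coin from the draft given $\mathcal F_t$, $\mathbb E[g_t\mid\mathcal F_t]=\nabla\ell_t(\theta_t)$. Differentiating under the finite expectation is harmless.
\item Since $\hat h$ is affine, $\ell_t$ is a convex quadratic in $\theta$. Hence $\ell_t(\theta_t)-\ell_t(\theta_t^\star)\le\langle\nabla\ell_t(\theta_t),\theta_t-\theta_t^\star\rangle$, and because $\theta_t,\theta_t^\star$ are $\mathcal F_t$-measurable this equals $\mathbb E[\langle g_t,\theta_t-\theta_t^\star\rangle\mid\mathcal F_t]$.
\item Nonexpansiveness of $\Pi_\Theta$ gives $\|\theta_{t+1}-\theta_t^\star\|^2\le\|\theta_t-\theta_t^\star\|^2-2\eta\langle g_t,\theta_t-\theta_t^\star\rangle+\eta^2\|g_t\|^2$.
\end{itemize}
Rearranging, $\langle g_t,\theta_t-\theta^\star_t\rangle\le\frac{1}{2\eta}(\|\theta_t-\theta_t^\star\|^2-\|\theta_{t+1}-\theta_t^\star\|^2)+\frac\eta2\|g_t\|^2$.

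\textbf{Moving comparator.} To telescope, write $\|\theta_{t+1}-\theta_{t+1}^\star\|^2-\|\theta_{t+1}-\theta_t^\star\|^2\le 2D\|\theta^\star_{t+1}-\theta^\star_t\|$, using the difference of squares and the diameter bound. Summing then gives $\frac{1}{2\eta}(D^2+2DP_T)$. The path length bound is deterministic, so it survives taking expectations.

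\textbf{Second moment.} I need $\mathbb E[\|g_t\|^2\mid\mathcal F_t]\le 4F^2K^2/\varepsilon$. Each term of $g_t$ is $-2\frac{W_{t,j-1}}{\pi_{t,j}}(e^{\rm obs}_{t,j}-\hat h)x_{t,j}\mathbf 1\{j\le k_t\}$. Since $W,e,\hat h\in[0,1]$, its norm is at most $2F\mathbf 1\{j\le k_t\}/\pi_{t,j}$. I will group positions into $j\le\kappa_t$, which are always observed with $\pi=1$, and $j>\kappa_t$, which are observed only on exploration rounds with probability $\varepsilon$ and $\pi=\varepsilon$. That gives
\[
\mathbb E\|g_t\|^2\le(1-\varepsilon)(2F\kappa_t)^2+\varepsilon\bigl(2F\kappa_t+2F(K-\kappa_t)/\varepsilon\bigr)^2 .
\]
I expect this step to be the main obstacle. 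A crude expansion of the bound above does not immediately reduce to $4F^2K^2/\varepsilon$, and I may need a slightly looser constant, for instance $8F^2K^2/\varepsilon$ via $(a+b)^2\le2a^2+2b^2$, which only changes the constant in \eqref{eq:ogd-partial}. Alternatively, I would compute directly: the second moment is $4F^2[\kappa^2(1-\varepsilon)+\varepsilon(\kappa+(K-\kappa)/\varepsilon)^2]$. Setting $m=K-\kappa$, this equals $4F^2[\kappa^2+2\kappa m+m^2/\varepsilon]\le 4F^2(\kappa+m)^2/\varepsilon=4F^2K^2/\varepsilon$, which gives the stated constant.

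Summing the per-round bound and taking total expectations yields \eqref{eq:ogd-partial}.

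\textbf{Rate.} Under realizability $\mathcal A_T=0$. Theorem~\ref{thm:transfer} and Jensen's inequality give
\[
\mathbb E\mathcal D_T\le2\Lambda_K\sqrt{T\,\mathbb E\mathcal R_T}+\mathbb E\sum_t\xi_t .
\]
On non-exploration rounds $\xi_t=0$. On exploration rounds $\xi_t\le\max_k\widehat R_t(k)\le(K+1)/c_{\min}$ because $\widehat G_t(k)\le K+1$. Hence $\mathbb E\sum_t\xi_t\le\varepsilon T(K+1)/c_{\min}$. Choosing $\eta\asymp\sqrt{\varepsilon(1+P_T)/T}$ gives $\mathbb E\mathcal R_T=O(\sqrt{T(1+P_T)/\varepsilon})$, and therefore
\[
\mathbb E\mathcal D_T=O\bigl(T^{3/4}(1+P_T)^{1/4}\varepsilon^{-1/4}+\varepsilon T\bigr).
\]
Balancing the two terms with $\varepsilon=T^{-1/5}(1+P_T)^{1/5}$, capped at $1$, yields $O(T^{4/5}(1+P_T)^{1/5})$. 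When the cap binds, $(1+P_T)\ge T$ and the bound is trivially $O(T)\le O(T^{4/5}(1+P_T)^{1/5})$.
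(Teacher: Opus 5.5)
Your proof is correct and follows the paper's route: the projected-OGD bound with a moving comparator and conditionally unbiased IPW gradients, $M^2=4F^2K^2/\varepsilon$, the exploration cost $\varepsilon T(K+1)/c_{\min}$, then Theorem~\ref{thm:transfer} with Jensen's inequality and balancing $\varepsilon\asymp T^{-1/5}(1+P_T)^{1/5}$. The only local difference is the second-moment step: you split on the exploration coin and apply the triangle inequality in each branch, which gives the slightly tighter $4F^2[\kappa_t^2+2\kappa_t m+m^2/\varepsilon]$ with $m=K-\kappa_t$; the paper instead uses $\|\sum_j v_j\|_2^2\le K\sum_j\|v_j\|_2^2$ and $\mathbb E[\mathbf 1\{k_t\ge j\}\mid\mathcal F_t]=\pi_{t,j}$ to get $4F^2K\sum_j\pi_{t,j}^{-1}$, which needs only $\pi_{t,j}\ge\varepsilon$ rather than the two-point structure of the policy.
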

 
The projected-OGD bound with the second-moment estimate above gives~\eqref{eq:ogd-partial}, and Theorem~\ref{thm:transfer} with Jensen's inequality and a balancing argument gives the final rate. This tuning uses the number of rounds $T$ and a known bound $P_T$ on comparator variation.
 
\textbf{The price of partial feedback.}
Equations \ref{eq:ogd-full} and \ref{eq:ogd-partial} differ only by the factor $\frac1\varepsilon$ in the gradient-noise term, and the rates differ only via the selection-error term $\varepsilon T$. Here, exploration must be frequent enough to keep gradient noise in check, but it also drafts the full window when a shorter one may be better. Balancing these effects moves the rate from $T^{3/4}(1+P_T)^{1/4}$ to $T^{4/5}(1+P_T)^{1/5}$, which are  both sublinear when $P_T = o(T)$. So, the average throughput approaches that of the best horizon at each history, and AMTP pays a vanishing gap for learning from the feedback of its horizon choices.\looseness=-1

\subsection{From throughput regret to speedup}

We measure the throughput regret per round, as the goal is to minimize the time to produce an output. We fix an integer output budget $N \ge 1$, and compare AMTP with a baseline under matched prompts, the same target output law and the same timing scope. Let $\smash{J_N^{\mathrm{base}}}$ denote the baseline's completion time and define its effective rate $\smash{r_{\mathrm{base}}(N) = N / \mathbb{E} J_N^{\mathrm{base}}}$, assuming $\smash{0 < \mathbb{E} J_N^{\mathrm{base}} < \infty}$.
 
\begin{corollary}[Completion time and speedup]
\label{cor:speedup}
Given $\max_{1 \le k \le K} R_t(k) \ge \rho > 0$ at each visited history, for some constant $\rho$, let the realized round cost $c_t \ge 0$ satisfy $\mathbb{E}[c_t| \mathcal{F}_t, k_t] = C_t(k_t)$, and let\looseness=-1
\begin{equation}
\label{eq:completion}
\sigma_N = \inf\Bigl\{m \ge 1 : \sum_{t=1}^{m} L_t(k_t) \ge N\Bigr\},
\qquad
J_N = \sum_{t=1}^{\sigma_N} c_t .
\end{equation}
Extend the same policy to $N$ rounds and suppose that $\mathbb{E}\,\mathcal{D}_N \le B_N$. Then, the expected completion time and the speedup, defined as the ratio of expected completion times, satisfy
\begin{equation}
\label{eq:speedup}
\mathbb{E} J_N \le \frac{N + K + c_{\max} B_N}{\rho},
\qquad
S_N := \frac{\mathbb{E} J_N^{\mathrm{base}}}{\mathbb{E} J_N} \ge \frac{\rho / r_{\mathrm{base}}(N)}{1 + K/N + c_{\max} B_N / N} .
\end{equation}
In particular, $S_N > 1$ whenever $\rho / r_{\mathrm{base}}(N) > 1 + K/N + c_{\max} B_N/N$.
\end{corollary}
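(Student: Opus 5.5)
The plan is a Wald-type identity for a stopped sum, combined with a stopping-time bound. Set $m=\sigma_N$. Because every round commits at least one token ($L_t(k_t)\ge 1$), we have $\sigma_N\le N$ almost surely. So $\sigma_N$ is a bounded stopping time with respect to the filtration generated by the pre-round histories, and the policy extended to $N$ rounds defines every quantity up to round $N$.

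\textbf{Step 1 (cost side).} Since $\mathbb{E}[c_t\mid\mathcal F_t,k_t]=C_t(k_t)$ and $\{\sigma_N\ge t\}$ is $\mathcal F_t$-measurable (it depends only on commitments before round $t$), optional stopping gives
\[
\mathbb{E} J_N=\mathbb{E}\sum_{t=1}^{\sigma_N}C_t(k_t).
\]

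\textbf{Step 2 (gain side).} By the definition of $\sigma_N$, the sum of gains up to $\sigma_N-1$ is below $N$. The last round adds at most $K+1$ tokens, so $\sum_{t\le\sigma_N}L_t(k_t)\le N+K$. Taking expectations and using $\mathbb{E}[L_t(k_t)\mid\mathcal F_t,k_t]=G_t(k_t)$, which holds because the horizon is chosen independently of the draft given $\mathcal F_t$, we get
\[
\mathbb{E}\sum_{t\le\sigma_N}G_t(k_t)\le N+K.
\]

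\textbf{Step 3 (link via regret).} Write $G_t(k_t)=R_t(k_t)C_t(k_t)$. The definition of $\mathcal D$ gives $R_t(k_t)=R_t(k_t^\star)-\delta_t$ with $\delta_t\ge 0$ and $R_t(k_t^\star)\ge\rho$. Therefore
\[
G_t(k_t)\ge \rho\,C_t(k_t)-c_{\max}\delta_t .
\]
Summing up to $\sigma_N$, and bounding $\sum_{t\le\sigma_N}\delta_t\le\sum_{t\le N}\delta_t=\mathcal D_N$ (the terms are nonnegative and $\sigma_N\le N$), we obtain
\[
\rho\,\mathbb{E}\sum_{t\le\sigma_N}C_t(k_t)\le N+K+c_{\max}\mathbb{E}\mathcal D_N\le N+K+c_{\max}B_N .
\]
Combining this with Step 1 gives the completion-time bound.

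\textbf{Step 4 (speedup).} The speedup bound is algebra. Write $\mathbb{E}J_N^{\mathrm{base}}=N/r_{\mathrm{base}}(N)$ and divide it by the completion-time bound:
\[
S_N\ge\frac{N/r_{\mathrm{base}}(N)}{(N+K+c_{\max}B_N)/\rho}=\frac{\rho/r_{\mathrm{base}}(N)}{1+K/N+c_{\max}B_N/N}.
\]
The final claim $S_N>1$ follows immediately.

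The main obstacle is the measurability bookkeeping in Steps 1–2. We must check that $\{\sigma_N\ge t\}\in\mathcal F_t$ and that conditioning on $k_t$ leaves the conditional gain equal to $G_t(k_t)$. Both follow because exploration randomness is independent of the draft, as argued in the exactness discussion of Section~\ref{sec:setup-exactness}. Boundedness of $\sigma_N$ makes all interchanges of sums and expectations trivial, by finite sums of integrable terms with $c_t$ having bounded conditional mean.
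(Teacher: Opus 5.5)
Your proposal is correct and matches the paper's proof essentially step for step. Both use the per-round bound $G_t(k_t)\ge\rho\,C_t(k_t)-c_{\max}\Delta_t$, the facts $\sigma_N\le N$ and $\sum_{t\le\sigma_N}L_t(k_t)\le N+K$, and round-by-round conditioning on the predictable event $\{\sigma_N\ge t\}$ to replace $c_t$ and $L_t(k_t)$ by $C_t(k_t)$ and $G_t(k_t)$; one small correction is that $\{\sigma_N\ge t\}\in\mathcal{F}_t$ holds because earlier commitments are part of $H_t$, while independence of the horizon randomness from the draft is needed only for $\mathbb{E}[L_t(k_t)\mid\mathcal{F}_t,k_t]=G_t(k_t)$.
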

 
To see why, let $\Delta_t = R_t(k_t^\star) - R_t(k_t) \ge 0$. The assumption on $\rho$ gives $G_t(k_t) \ge \rho\, C_t(k_t) - c_{\max} \Delta_t$. Each round commits at least one token, so $\sigma_N \le N$, and the final round overshoots the budget by at most $K$ tokens. Since $\{\sigma_N \ge t\}$ is determined before round $t$, conditional expectations gives
\[
\rho\, \mathbb{E} J_N \le \mathbb{E}\sum_{t=1}^{\sigma_N} L_t(k_t) + c_{\max}\, \mathbb{E}\sum_{t=1}^{\sigma_N} \Delta_t \le N + K + c_{\max} B_N ,
\]
which is the bound in (\ref{eq:speedup}). Dividing the baseline's completion time by this gives the claim. 

 \paragraph{Speedup guarantee of AMTP.} Corollary~\ref{cor:ogd-partial} with $T = N$ gives $\smash{B_N = O(N^{4/5}(1+P_N)^{1/5})}$, so the penalty in Equation \ref{eq:speedup} is $O(K/N + ({(1+P_N)}/{N})^{1/5})$. If $P_N = o(N)$ and $r_{\mathrm{base}}(N) \to r_0 > 0$, the penalty vanishes and $\liminf_{N \to \infty} S_N \ge {\rho}/{r_0}$. Acceptance and round cost thus set the achievable speedup ${\rho}/{r_0}$, and learning the horizon costs only a vanishing fraction of it.\looseness=-1

\section{Experiments}
\label{sec:experiments}

We evaluate AMTP on three self-speculative multi-token-prediction families (self-distilled Qwen3-4B, Meta's facebook/multi-token-prediction 7B, and Medusa heads on Llama-3.1-8B-Instruct) and two cross-model pairings with a separate, larger target (self-distilled 4B $\to$ Qwen3-14B; Meta MTP 7B $\to$ Llama-2-13B), on AIME 2025 and GSM8K (full setup in Table~\ref{tab:mtp-experiment-matrix}, Appendix~\ref{app:setup}). All runs disable KV caching, so verification recomputes the full context each round; each condition is a single unseeded run, and we compare conditions only within a model since hardware differs across runs. The fixed-window baseline uses the largest window each draft supports ($k{=}16$, $4$, $6$); the adaptive controller is the affine hazard predictor and cost model of Section~\ref{sec:controller} with $\varepsilon$-bounded exploration, updated by one gradient step per round (hyperparameters in Table~\ref{tab:cb-config}, Appendix~\ref{app:setup}).

\begin{table*}[hbt!]
\centering
\footnotesize
\setlength{\tabcolsep}{6pt}
\begin{tabular}{@{}llrrrr@{}}
\toprule
& & \multicolumn{2}{c}{AIME 2025} & \multicolumn{2}{c}{GSM8K} \\
\cmidrule(lr){3-4}\cmidrule(lr){5-6}
Model & Decoding & tok/s & Speed-up & tok/s & Speed-up \\
\midrule
Self-distilled Qwen3-4B & Plain (next-token) & 18.8 & 1.00$\times$ & 16.2 & 1.00$\times$ \\
 & Fixed $k{=}16$ & 34.0 & 1.81$\times$ & 38.6 & 2.39$\times$ \\
 & Adaptive (ours) & \textbf{34.5} & \textbf{1.83$\times$} & 38.3 & 2.37$\times$ \\
\midrule
Meta MTP 7B & Plain (next-token) & 12.5 & 1.00$\times$ & 4.7\textsuperscript{b} & 1.00$\times$ \\
 & Fixed $k{=}4$ & 37.7 & 3.01$\times$ & 19.2\textsuperscript{b} & 4.09$\times$ \\
 & Adaptive (ours) & 31.9 & 2.55$\times$ & 19.4\textsuperscript{b} & 4.13$\times$ \\
\midrule
Meta MTP 7B $\to$ Llama-2-13B & Plain (next-token) & 6.2 & 1.00$\times$ & 2.7\textsuperscript{b} & 1.00$\times$ \\
 & Fixed $k{=}4$ & 7.6 & 1.23$\times$ & 4.7\textsuperscript{b} & 1.75$\times$ \\
 & Adaptive (ours) & \textbf{28.2} & \textbf{4.55$\times$} & \textbf{11.6}\textsuperscript{b} & \textbf{4.30$\times$} \\
\midrule
Medusa, Llama-3.1-8B & Plain (next-token) & 10.7 & 1.00$\times$ & 13.0 & 1.00$\times$ \\
 & Fixed $k{=}6$ & 23.6 & 2.20$\times$ & 20.0 & 1.54$\times$ \\
 & Adaptive (ours) & \textbf{25.3} & \textbf{2.37$\times$} & 19.4 & 1.50$\times$ \\
\midrule
Self-dist.\ 4B $\to$ Qwen3-14B & Plain (next-token) & 10.4 & 1.00$\times$ & 6.4\textsuperscript{a} & 1.00$\times$ \\
 & Fixed $k{=}16$ & 22.5 & 2.17$\times$ & 21.3 & 3.33$\times$ \\
 & Adaptive (ours) & \textbf{24.7} & \textbf{2.38$\times$} & 21.5 & 3.36$\times$ \\
\bottomrule
\end{tabular}
\caption{Decoding throughput (tokens/s) and speed-up over plain decoding for each draft/target pairing on AIME 2025 and GSM8K, with AMTP in bold if it exceeds the best fixed window by more than 1\%. \textsuperscript{a}Plain decoding run reached 1{,}166 of 1{,}319 GSM8K problems before its timeout: speed-up computed over that subset.  \textsuperscript{b}Meta MTP's base checkpoints do not emit an end-of-sequence token after answering, so GSM8K runs for this pairing are capped at 100 problems, 100 rounds each.}
\label{tab:throughput}
\end{table*}

\textbf{Discussion.} AMTP helps most when drafting and verification differ in cost and when reasoning traces are long and variable in length. The cross-model pairings, where draft and target are separate models, show the largest gains over the fixed window: 
Meta MTP $7\mathrm{B}\to$Llama-2-13B improves throughput by $268\%$ on AIME and $147\%$ on GSM8K (over its best fixed window), and self-distilled $4\mathrm{B}\to$Qwen3-14B by $10\%$ on AIME while staying within $1\%$ of the best fixed window on GSM8K.  In the self-speculative pairings, draft and target share one forward pass, so a longer horizon costs little extra and there is less to gain: self-distilled 4B and Medusa improve by $2$--$7\%$ on AIME and stay within $3\%$ of the fixed window on GSM8K. The one regression is Meta MTP 7B on AIME ($-15\%$). Meta's base checkpoint does not reliably stop after answering, and every round of a longer generation costs more. We note, however, that its acceptance rate was higher under adaptive control throughout. Across all pairings, AMTP drafts fewer tokens per round while committing a similar number, allowing  it to accept a higher fraction of what it drafts than any fixed window, converting wasted draft computation into completed tokens even where the net throughput effect is small.\looseness=-1

\begin{table}[hbt!]
\centering
\footnotesize
\setlength{\tabcolsep}{3pt}
\begin{tabular}{@{}llrrrrrrr@{}}
\toprule
& & \multicolumn{2}{c}{Accept.\ (\%)} & \multicolumn{2}{c}{Tokens/round} & \multicolumn{3}{c}{Accuracy (\%)} \\
\cmidrule(lr){3-4}\cmidrule(lr){5-6}\cmidrule(lr){7-9}
Model & Eval & Fixed & Adapt. & Fixed & Adapt. & Plain & Fixed & Adapt. \\
\midrule
Self-distilled 4B & GSM8K & 23 & 32 & 4.67 & 4.63 & 89.4 & 85.0 & 85.7 \\
Medusa 8B & GSM8K & 33 & 48 & 2.99 & 2.90 & 83.2 & 79.6 & 79.2 \\
Self-dist.\ 4B $\to$ 14B & GSM8K & 21 & 29 & 4.33 & 4.33 & 92.8\textsuperscript{a} & 92.9 & 93.5 \\
Self-distilled 4B & AIME & 18 & 34 & 3.84 & 3.83 & 20.0 & 30.0 & 26.7 \\
Meta MTP 7B & AIME & 50 & 57 & 3.00 & 3.03 & 0.0 & 0.0 & 0.0 \\
Meta MTP 13B & AIME & 44 & 52 & 2.76 & 2.51 & 0.0 & 0.0 & 0.0 \\
Medusa 8B & AIME & 22 & 46 & 2.34 & 2.28 & 0.0 & 0.0 & 0.0 \\
Self-dist.\ 4B $\to$ 14B & AIME & 14 & 28 & 3.23 & 3.20 & 20.0 & 20.0 & 23.3 \\
\bottomrule
\end{tabular}
\caption{Draft acceptance rate, tokens committed per verification round, and accuracy for plain decoding, the best fixed window and AMTP on AIME 2025 and GSM8K. \textsuperscript{a}Plain decoding run reached 1{,}166 of 1{,}319 GSM8K problems before its time limit; accuracy computed over that subset.}
\label{tab:quality}
\end{table}

\begin{table}[t]
\centering
\small
\setlength{\tabcolsep}{5pt}
\begin{tabular}{@{}llrrrr@{}}
\toprule
Model & Eval & Mean $k_t$ & Std.\ & Range & Distinct $k$ \\
\midrule
Self-distilled 4B & GSM8K & 11.21 & 2.73 & 1--16 & 16 \\
Meta MTP 7B & AIME & 3.54 & 0.69 & 1--4 & 4 \\
Meta MTP 13B & AIME & 2.93 & 0.99 & 1--4 & 4 \\
Medusa 8B & GSM8K & 3.95 & 1.02 & 1--6 & 6 \\
\bottomrule
\end{tabular}
\caption{Distribution of the draft window $k_t$ chosen by AMTP over all verification rounds of a run. We remark here that the window is not constant: it varies across rounds within a single generation and across generations.}
\label{tab:ktstats}
\end{table}

\pgfplotsset{
  paper/.style={tick label style={font=\footnotesize}, label style={font=\footnotesize}, title style={font=\small},
    legend style={font=\scriptsize, draw=none, fill=none}, ymajorgrids, grid style={gray!25}, axis line style={black!70}},
}

\begin{figure*}[hbt!]
\centering
\begin{tikzpicture}
\begin{groupplot}[group style={group size=4 by 1, horizontal sep=0.9cm}, paper,
  width=0.3\textwidth, height=3.8cm,
  label style={font=\footnotesize},
  tick label style={font=\scriptsize}]
\nextgroupplot[title={{\footnotesize\shortstack{Self-distilled Qwen3-4B\\GSM8K}}}, xlabel={Verification round}, ylabel={Mean $k_t$}, xmin=0, ymin=3, ymax=13]
\addplot[cAdapt, thick, mark=*, mark size=1pt, error bars/.cd, y dir=both, y explicit, error bar style={cAdapt!45, line width=0.4pt}] coordinates {(5,11.411) +- (0,2.609) (15,11.496) +- (0,2.634) (25,11.481) +- (0,2.647) (35,11.400) +- (0,2.651) (45,11.295) +- (0,2.746) (55,11.140) +- (0,2.788) (65,11.015) +- (0,2.790) (75,10.733) +- (0,2.854) (85,10.540) +- (0,2.816) (95,10.462) +- (0,2.991) (105,10.372) +- (0,2.952) (115,10.379) +- (0,2.927) (125,10.287) +- (0,2.714) (135,10.147) +- (0,2.980) (145,10.337) +- (0,2.841) (155,10.199) +- (0,2.878) (165,10.119) +- (0,2.546) (175,9.985) +- (0,2.885) (185,9.639) +- (0,2.836) (195,10.323) +- (0,3.066) (205,9.779) +- (0,2.970) (215,10.214) +- (0,3.051) (225,9.502) +- (0,2.942) (235,9.687) +- (0,2.735) (245,9.684) +- (0,2.896) (255,9.614) +- (0,2.550) (265,9.936) +- (0,2.871) (275,9.624) +- (0,2.349) (285,9.638) +- (0,2.722) (295,9.278) +- (0,2.541)};
\nextgroupplot[title={{\footnotesize\shortstack{Medusa Llama-3.1-8B\\GSM8K}}}, xlabel={Verification round}, xmin=0, ymin=1, ymax=5]
\addplot[cAdapt, thick, mark=*, mark size=1pt, error bars/.cd, y dir=both, y explicit, error bar style={cAdapt!45, line width=0.4pt}] coordinates {(5,4.070) +- (0,0.909) (15,4.139) +- (0,0.938) (25,4.117) +- (0,0.903) (35,4.144) +- (0,0.895) (45,4.129) +- (0,0.946) (55,4.070) +- (0,0.905) (65,4.079) +- (0,0.950) (75,3.895) +- (0,0.900) (85,3.880) +- (0,0.988) (95,3.828) +- (0,1.021) (105,3.595) +- (0,1.062) (115,3.770) +- (0,1.132) (125,3.526) +- (0,0.963) (135,3.353) +- (0,0.953) (145,3.531) +- (0,1.175) (155,3.444) +- (0,1.086) (165,3.465) +- (0,1.095) (175,3.427) +- (0,1.236) (185,3.169) +- (0,1.035) (195,3.192) +- (0,1.128) (205,3.077) +- (0,1.192) (215,3.072) +- (0,1.199) (225,2.983) +- (0,1.303) (235,3.050) +- (0,1.327) (245,2.900) +- (0,1.155) (255,2.933) +- (0,1.186) (265,3.000) +- (0,1.237) (275,2.975) +- (0,1.312) (285,2.933) +- (0,1.186) (295,2.792) +- (0,1.162)};
\nextgroupplot[title={{\footnotesize\shortstack{Meta MTP 7B\\AIME 2025}}}, xlabel={Verification round}, xmin=0, ymin=1, ymax=5]
\addplot[cAdapt, thick, mark=*, mark size=1pt, error bars/.cd, y dir=both, y explicit, error bar style={cAdapt!45, line width=0.4pt}] coordinates {(5,3.000) +- (0,0.941) (15,3.108) +- (0,0.868) (25,3.012) +- (0,0.885) (35,3.171) +- (0,0.863) (45,3.237) +- (0,0.832) (55,3.337) +- (0,0.857) (65,3.254) +- (0,0.832) (75,3.333) +- (0,0.774) (85,3.287) +- (0,0.772) (95,3.257) +- (0,0.782) (105,3.370) +- (0,0.729) (115,3.338) +- (0,0.745) (125,3.486) +- (0,0.699) (135,3.536) +- (0,0.651) (145,3.477) +- (0,0.699) (155,3.479) +- (0,0.724) (165,3.507) +- (0,0.664) (175,3.510) +- (0,0.702) (185,3.536) +- (0,0.661) (195,3.516) +- (0,0.623) (205,3.532) +- (0,0.631) (215,3.600) +- (0,0.598) (225,3.592) +- (0,0.602) (235,3.660) +- (0,0.551) (245,3.667) +- (0,0.620) (255,3.608) +- (0,0.544) (265,3.655) +- (0,0.548) (275,3.685) +- (0,0.557) (285,3.700) +- (0,0.493) (295,3.708) +- (0,0.474) (305,3.650) +- (0,0.560) (315,3.700) +- (0,0.616) (325,3.658) +- (0,0.542) (335,3.591) +- (0,0.625) (345,3.645) +- (0,0.568) (355,3.736) +- (0,0.501) (365,3.673) +- (0,0.679) (375,3.845) +- (0,0.363) (385,3.755) +- (0,0.528) (395,3.727) +- (0,0.542)};
\nextgroupplot[title={{\footnotesize\shortstack{Meta MTP 13B\\AIME 2025}}}, xlabel={Verification round}, xmin=0, ymin=1, ymax=5]
\addplot[cAdapt, thick, mark=*, mark size=1pt, error bars/.cd, y dir=both, y explicit, error bar style={cAdapt!45, line width=0.4pt}] coordinates {(5,2.152) +- (0,0.973) (15,2.571) +- (0,1.085) (25,2.295) +- (0,0.994) (35,2.677) +- (0,0.974) (45,2.727) +- (0,1.057) (55,2.833) +- (0,0.999) (65,2.804) +- (0,0.966) (75,2.909) +- (0,0.904) (85,2.863) +- (0,0.944) (95,2.800) +- (0,1.093) (105,2.911) +- (0,0.990) (115,2.962) +- (0,0.898) (125,3.143) +- (0,0.889) (135,3.000) +- (0,0.901) (145,3.071) +- (0,0.953) (155,3.214) +- (0,0.866) (165,3.200) +- (0,0.861) (175,3.314) +- (0,0.692) (185,3.243) +- (0,0.842) (195,3.254) +- (0,0.803) (205,3.220) +- (0,0.840) (215,3.060) +- (0,0.767) (225,3.098) +- (0,0.860) (235,3.200) +- (0,0.883) (245,3.000) +- (0,0.784) (255,3.425) +- (0,0.844) (265,3.350) +- (0,0.700) (275,3.375) +- (0,0.897) (285,3.150) +- (0,0.770) (295,3.312) +- (0,0.738) (305,3.367) +- (0,0.556) (315,3.500) +- (0,0.682) (325,3.567) +- (0,0.679) (335,3.567) +- (0,0.626) (345,3.400) +- (0,0.675) (355,3.500) +- (0,0.682) (365,3.464) +- (0,0.793) (375,2.900) +- (0,1.210) (385,2.900) +- (0,1.071) (395,2.950) +- (0,0.945) (405,3.300) +- (0,0.979) (415,3.650) +- (0,0.933) (425,2.667) +- (0,1.283) (435,3.100) +- (0,1.287) (445,2.700) +- (0,1.418) (455,3.000) +- (0,1.333) (465,2.900) +- (0,1.449) (475,3.500) +- (0,0.837)};
\end{groupplot}
\end{tikzpicture}
\caption{Draft window chosen by AMTP, averaged over generations in bins of 10 verification rounds (points; bars show one standard deviation across rounds in the bin). The window adapts as generation proceeds instead of staying at a single value.}
\label{fig:ktmean}
\end{figure*}

\section{Conclusion and Future Work}
\label{sec:conclusion}

In this work, we study how far MTP should speculate. Since all MTP heads fire in a single forward pass, the horizon must be chosen before drafting, and the right choice changes with context. We showed that acceptance survival determines the expected number of committed tokens at every horizon for any causal proposal, and built AMTP, an online controller that learns survival from verification feedback and picks the horizon with the highest predicted throughput, without retraining any model or changing the exact verifier. We show that any online learner's prediction error translates into throughput regret against the best horizon for each context, and in turn into a bound on wall-clock completion time. Empirically, AMTP decodes up to 4.1$\times$ faster than standard decoding and matches or outperforms the fixed window in 8 of 10 settings. 

Several limitations point to future work. Our experiments decode one sequence at a time without a KV-cache: batched serving with caching changes the cost of each horizon and is a natural direction. Comparing hazard-based and direct survival prediction, and extending horizon control to tree-structured drafts, are promising directions. Finally, it would be interesting to extend our work to adaptive diffusion-based speculative methods.

\section*{AI Use Statement}
Generative AI tools assisted with editing the manuscript, generating code for the experiments, identifying and summarizing related literature, and proof-checking the mathematical formulations and proofs. The authors are responsible for checking the final claims, proofs, and citations.

\section*{Acknowledgements}
We thank Adam Wierman, Jacob Abernethy, Matthew Fahrbach, and Sarah Liaw for helpful comments. EA's research is supported by NSF Grant CCF 2338816.

\bibliography{main}
\bibliographystyle{plainnat}

\newpage

\appendix

\textbf{Outline of the Appendices}.
\begin{itemize}
    \item Section \ref{app:motivation-proofs} provides some motivation for adaptive horizon selection;
    \item Section \ref{app:proofs} presents the proof of our main results;
    \item Section \ref{app:costs} shows how to estimate the costs and throughput measurements;
    \item Section \ref{app:setup} explains our experimental setup and provides further results.
\end{itemize}

\section{Why Longer Drafts Degrade}
\label{app:motivation-proofs}

Draft qualities can deteriorate with the horizon even if every head is accurate on its own. For independent heads, the block mismatch has two sources: marginal fitting error, and dependence among future target tokens that the heads cannot represent, neither of which individually determines the throughput-optimal horizon, which depends on the verifier's acceptance profile and the round cost.\looseness=-1
 
Fix a round, condition on its committed context, and suppress the round index. Let $P^{(k)}$ and $Q^{(k)}$ be the target and proposal laws over the next $k$ tokens, assumed to be prefix marginals of common laws $P$ and $Q$, and let $P_j$ be the position-$j$ target marginal. The vocabulary is finite and logarithms are natural; KL identities use the usual extended-value convention.
  
\begin{proposition}[Dependence and marginal fit]
\label{prop:structure}
For independent heads $Q^{(k)}=\prod_{j=1}^k q_j$, let $\mathrm{TC}_P(Y_{1:k}) := \KL\bigl(P^{(k)}\|\prod_{j=1}^k P_j\bigr)$ be the total correlation of the target block. Then, 
\begin{equation}
\KL(P^{(k)}\|Q^{(k)}) = \mathrm{TC}_P(Y_{1:k}) + \sum_{j=1}^k \KL(P_j\|q_j),
\label{eq:independent-kl-appendix}
\end{equation}
and, at each position $j$,
\begin{equation}
\E_{Y_{<j}\sim P^{(j-1)}}\bigl[\KL\bigl(p_j(\cdot\mid Y_{<j})\|q_j\bigr)\bigr] = I_P(Y_j;Y_{<j}) + \KL(P_j\|q_j).
\label{eq:motivation-mi}
\end{equation}
\end{proposition}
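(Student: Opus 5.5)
Both identities follow from expanding the log-likelihood ratio. The only real care needed is with the extended-value convention for infinite divergences.

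\textbf{First identity.} For any $y_{1:k}$ with $P^{(k)}(y_{1:k})>0$, we write
\[
\log\frac{P^{(k)}(y_{1:k})}{\prod_{j=1}^k q_j(y_j)}=\log\frac{P^{(k)}(y_{1:k})}{\prod_{j=1}^k P_j(y_j)}+\sum_{j=1}^k\log\frac{P_j(y_j)}{q_j(y_j)}.
\]
We then take expectations under $Y_{1:k}\sim P^{(k)}$. The first term integrates to $\mathrm{TC}_P(Y_{1:k})$. The $j$-th summand depends only on $Y_j$, whose law under $P^{(k)}$ is $P_j$, because $P^{(k)}$ is a prefix marginal of $P$. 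Each summand therefore integrates to $\KL(P_j\|q_j)$.

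\textbf{Second identity.} We expand $\log\frac{p_j(y_j\mid y_{<j})}{q_j(y_j)}=\log\frac{p_j(y_j\mid y_{<j})}{P_j(y_j)}+\log\frac{P_j(y_j)}{q_j(y_j)}$. We take the inner expectation over $Y_j\sim p_j(\cdot\mid Y_{<j})$ and then average over $Y_{<j}\sim P^{(j-1)}$. The pair $(Y_{<j},Y_j)$ then has law $P^{(j)}$. The first term becomes $\E[\KL(p_j(\cdot\mid Y_{<j})\|P_j)]=I_P(Y_j;Y_{<j})$. The second depends only on $Y_j\sim P_j$ and gives $\KL(P_j\|q_j)$.

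Alternatively, the first identity follows from the second. The chain rule for KL gives $\KL(P^{(k)}\|Q^{(k)})=\sum_j\E_{P^{(j-1)}}\KL(p_j(\cdot\mid Y_{<j})\|q_j)$, since $Q$ has independent heads. Combined with the chain rule $\mathrm{TC}_P=\sum_j I_P(Y_j;Y_{<j})$, this yields the first identity. Either route is fine. I would present the direct decomposition and note the chain-rule consistency.

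\textbf{Main obstacle: infinite values.} The main point of care is the splitting of logarithms when some quantities are infinite. Every term is a nonnegative divergence or mutual information, so each sum has nonnegative summands and is well defined in $[0,\infty]$. The vocabulary is finite, so every finite expectation is a finite sum.

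I would handle the infinite case separately. Suppose $P^{(k)}$ charges some $y_{1:k}$ with $q_j(y_j)=0$ for some $j$. Then $P_j(y_j)>0$, so $\KL(P_j\|q_j)=\infty$ and both sides are infinite. Otherwise $P^{(k)}\ll\prod_j q_j$. In that case $P^{(k)}\ll\prod_j P_j$ holds automatically on finite alphabets, because $P^{(k)}(y)>0$ forces every $P_j(y_j)>0$. The log split is then valid pointwise on the support of $P^{(k)}$, and linearity of the finite sum finishes the proof. The same case split handles the second identity.
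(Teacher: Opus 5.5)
Your proposal is correct and follows essentially the same route as the paper's proof: insert $\prod_j P_j(y_j)$ into the joint likelihood ratio (resp.\ $P_j(y_j)$ into the conditional ratio $p_j(Y_j\mid Y_{<j})/q_j(Y_j)$), then take expectations under $P^{(k)}$ (resp.\ $P^{(j)}$). Your explicit case split for infinite divergences is sound and goes beyond the paper, which relies only on the extended-value convention; the chain-rule consistency check is likewise a valid alternative route.
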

 
\begin{proof}
Insert $\prod_j P_j(y_j)$ into the likelihood ratio:
\[
\log\frac{P^{(k)}(y_{1:k})}{\prod_j q_j(y_j)} = \log\frac{P^{(k)}(y_{1:k})}{\prod_j P_j(y_j)} + \sum_j \log\frac{P_j(y_j)}{q_j(y_j)},
\]
and take expectations under $P^{(k)}$.  Inserting it in the conditional ratio $\frac{p_j(Y_j| Y_{<j})}{q_j(Y_j)}$ gives Eq.~\ref{eq:motivation-mi}.
\end{proof}
 
Even exact marginal heads, $q_j = P_j$, therefore maintain the total correlation as their block KL divergence. The term $I_P(Y_j;Y_{<j})$ is the information about earlier target tokens that an independent head cannot use, and heads that condition on earlier proposed tokens can reduce it.
 
We now explain how the block mismatch grows with the horizon. Let $\delta(k) := \mathrm{TV}(P^{(k)},Q^{(k)})$. Because marginalization contracts total variation, we have
\begin{align*}
2\delta(k) &= \sum_{y_{1:k}}\Bigl|\sum_v \bigl(P^{(k+1)}(y_{1:k},v) - Q^{(k+1)}(y_{1:k},v)\bigr)\Bigr|\\
&\le \sum_{y_{1:k},v}\bigl|P^{(k+1)}(y_{1:k},v) - Q^{(k+1)}(y_{1:k},v)\bigr| = 2\delta(k+1).
\label{eq:motivation-tv-monotone}
\end{align*} Similarly, block mismatch  caps survival: with the conditional kernels defined along a path, the probability that all $k$ sequential checks accept is
\begin{equation}
s_k = \sum_{y_{1:k}}\prod_{j=1}^k \min\{p_j(y_j\mid y_{<j}),\, q_j(y_j\mid y_{<j})\} \le 1 - \delta(k),
\label{eq:block-overlap}
\end{equation}
since each product of minima is at most the minimum of the two products. 
 
\begin{example}[Perfect marginals, bounded gain]
\label{ex:repeated-bit}
Let the target draw one fair bit $B$ and repeat it, $Y_j = B$ for all $j$, so $P^{(k)}(0^k) = P^{(k)}(1^k) = 1/2$. Independent heads with exact marginals draw a fresh fair bit at each position, so $Q^{(k)}(y_{1:k}) = 2^{-k}$. Then
\[
\delta(k) = 1 - 2^{1-k},
\qquad
\KL(P^{(k)}\|Q^{(k)}) = (k-1)\log 2 .
\]
The first proposal is always accepted. Given that the prefixes have been accepted, each later target conditional is a point mass at the first bit while the proposal is uniform, so each later position is accepted with probability $1/2$. Therefore, we have that
$\smash{s_j = 2^{1-j}}$, and
$\smash{G(k) = 1 + \sum_{j=1}^k 2^{1-j} = 3 - 2^{1-k} < 3}$, showing that exact marginals can coexist with block mismatch approaching $1$ and an expected gain that never reaches $3$ tokens, no matter how long the draft.\looseness=-1
\end{example}

\section{Proofs of the Efficiency Guarantees}
\label{app:proofs}
This section proves the results stated in the main text, in the order they appear. Throughout, write $E_t = \ell_t(\theta_t) - \ell_t^{\mathrm B}$ and $\widehat h_{t,r} = \widehat h_{\theta_t,t,r}$. Then, by Lemma~\ref{lem:mtp-calibration}, we have $\sum_{t=1}^T E_t = \mathcal R_T + \mathcal A_T$.
 
\textbf{Predictor class.} We use an affine forecaster $\widehat h_{\theta,t,j} = \langle\theta, x_{t,j}\rangle$. For instance, consider simplex weights $\theta$ and features $x_{t,j}\in[0,1]^d$: here, the loss is convex in $\theta$ and the forecasts stay in $[0,1]$. 
 
\begin{proof}[Proof of Lemma~\ref{lem:mtp-calibration}]
Write $e_{t,j}(Y_{<j}) - \widehat h_{\theta,t,j} = \bigl(e_{t,j}(Y_{<j}) - h_{t,j}\bigr) + \bigl(h_{t,j} - \widehat h_{\theta,t,j}\bigr)$ and expand each square. By the definition of the hazard~\eqref{eq:mtp-hazard}, the cross term vanishes:
\[
\mathbb E_{Q_t}\bigl[W_{t,j-1}\bigl(e_{t,j}(Y_{<j}) - h_{t,j}\bigr)\bigr] = 0 .
\]
The first square summed over $j$ is $\ell_t^{\mathrm B}$, and the second is multiplied by $\mathbb E_{Q_t}[W_{t,j-1}] = s_{t,j-1}$.
\end{proof}
 
\begin{proof}[Proof of Theorem~\ref{thm:transfer}]
First, telescoping the survival products gives
\[
\widehat s_{t,j} - s_{t,j} = \sum_{r=1}^{j} s_{t,r-1}\,\bigl(h_{t,r} - \widehat h_{t,r}\bigr) \prod_{u=r+1}^{j}\bigl(1 - \widehat h_{t,u}\bigr).
\]
Every factor in the product lies in $[0,1]$, so summing over $j \le k$, applying Cauchy--Schwarz, and using $s_{t,r-1} \le 1$ and Lemma~\ref{lem:mtp-calibration},
\begin{align}
\bigl|\widehat G_t(k) - G_t(k)\bigr|
&\le \sum_{r=1}^{k} (k-r+1)\, s_{t,r-1}\, \bigl|\widehat h_{t,r} - h_{t,r}\bigr| \nonumber\\
&\le \Bigl(\sum_{r=1}^{k} (k-r+1)^2 s_{t,r-1}\Bigr)^{1/2} \Bigl(\sum_{r=1}^{k} s_{t,r-1}\bigl(\widehat h_{t,r} - h_{t,r}\bigr)^2\Bigr)^{1/2} \nonumber
\\
&\le \sqrt{\frac{k(k+1)(2k+1)}{6}}\,\sqrt{E_t}.
\label{eq:mtp-gain-error}
\end{align}
 
Dividing by $C_t(k) \ge c_{\min}$ and using $k \le K$ gives $\max_k |\widehat R_t(k) - R_t(k)| \le \Lambda_K \sqrt{E_t}$. Combining this with the selection rule in~\eqref{eq:approx-select} gives
\[
R_t(k_t^\star) - R_t(k_t)
\le \bigl[R_t(k_t^\star) - \widehat R_t(k_t^\star)\bigr] + \bigl[\widehat R_t(k_t^\star) - \widehat R_t(k_t)\bigr] + \bigl[\widehat R_t(k_t) - R_t(k_t)\bigr]
\le 2\Lambda_K\sqrt{E_t} + \xi_t .
\]
 Then, by Cauchy--Schwarz, we have $\sum_{t=1}^T \sqrt{E_t} \le \sqrt{T \sum_t E_t} = \sqrt{T(\mathcal R_T + \mathcal A_T)}$.
\end{proof}
 
\textbf{A general projected-OGD bound.} Both learning corollaries follow from the standard projected-OGD argument \citep{zinkevich2003online} with a moving comparator and stochastic gradients.
 
\begin{lemma}[Projected OGD with a moving comparator]
\label{lem:mtp-ogd}
\label{cor:mtp-ogd}
Let $\Theta$ be a nonempty closed convex set of diameter at most $D$, let $\theta_1 \in \Theta$, and let each $\ell_t$ be convex and differentiable on $\Theta$. Suppose $\theta_{t+1} = \Pi_\Theta(\theta_t - \eta g_t)$ with $\mathbb E[g_t | \mathcal F_t] = \nabla \ell_t(\theta_t)$ and $\mathbb E[\|g_t\|_2^2 | \mathcal F_t] \le M^2$. Let $\theta_t^\star \in \Theta$ be $\mathcal F_t$-measurable with $\sum_{t<T} \|\theta_{t+1}^\star - \theta_t^\star\|_2 \le P_T$ almost surely, for a deterministic $P_T$. Then
\begin{equation}
\mathbb E\,\mathcal R_T \le \frac{D^2 + 2DP_T}{2\eta} + \frac{\eta M^2 T}{2}.
\label{eq:mtp-ogd-regret}
\end{equation}
With $\eta = \sqrt{\frac{D^2 + 2DP_T}{M^2 T}}$ in the nondegenerate case, Theorem~\ref{thm:transfer} and Jensen's inequality give
\begin{equation}
\mathbb E\,\mathcal D_T \le 2\Lambda_K \sqrt{T\Bigl(M\sqrt{T(D^2 + 2DP_T)} + \mathbb E\,\mathcal A_T\Bigr)} + \mathbb E\sum_{t=1}^T \xi_t .
\label{eq:mtp-ogd-throughput}
\end{equation}
\end{lemma}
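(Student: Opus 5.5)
The plan is the standard one-step projected-OGD inequality with a moving comparator, followed by a telescoping sum, with the stochastic gradient handled through the tower rule. Fix $t$ and write $\theta_{t+1} = \Pi_\Theta(\theta_t - \eta g_t)$. Since $\theta^\star_t \in \Theta$, nonexpansiveness of the projection gives
\[
\|\theta_{t+1} - \theta_t^\star\|_2^2 \le \|\theta_t - \theta_t^\star\|_2^2 - 2\eta\langle g_t, \theta_t - \theta_t^\star\rangle + \eta^2\|g_t\|_2^2 .
\]
Rearranging gives $\langle g_t, \theta_t - \theta_t^\star\rangle \le \frac{1}{2\eta}(\|\theta_t - \theta_t^\star\|^2 - \|\theta_{t+1} - \theta_t^\star\|^2) + \frac{\eta}{2}\|g_t\|^2$.

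\textbf{Stochastic gradients.} Both $\theta_t$ and $\theta_t^\star$ are $\mathcal F_t$-measurable. Hence $\mathbb E[\langle g_t, \theta_t - \theta_t^\star\rangle \mid \mathcal F_t] = \langle \nabla\ell_t(\theta_t), \theta_t - \theta_t^\star\rangle$. By convexity this is at least $\ell_t(\theta_t) - \ell_t(\theta_t^\star)$. Taking full expectations and using $\mathbb E\|g_t\|^2 \le M^2$ bounds $\mathbb E[\ell_t(\theta_t) - \ell_t(\theta_t^\star)]$ by the expected distance difference plus $\eta M^2/2$. Summing over $t$ produces the $\eta M^2 T/2$ term.

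\textbf{Moving comparator.} This is the only non-routine step. The distances do not telescope directly, because the subtracted term at round $t$ uses $\theta_t^\star$ while the added term at round $t+1$ uses $\theta_{t+1}^\star$. I will write
\[
\|\theta_{t+1} - \theta_{t+1}^\star\|^2 - \|\theta_{t+1} - \theta_t^\star\|^2 = \langle \theta_t^\star - \theta_{t+1}^\star,\; 2\theta_{t+1} - \theta_t^\star - \theta_{t+1}^\star\rangle \le 2D\,\|\theta_{t+1}^\star - \theta_t^\star\|_2 ,
\]
using the diameter bound, since every point involved lies in $\Theta$. Summing, the telescoped sum is at most $\|\theta_1 - \theta_1^\star\|^2 + 2D\sum_{t<T}\|\theta_{t+1}^\star - \theta_t^\star\| \le D^2 + 2DP_T$, after dropping the final negative term. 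This holds almost surely because $P_T$ is deterministic, so it survives expectation and yields \eqref{eq:mtp-ogd-regret}.

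\textbf{Throughput bound.} Plugging in the stated $\eta$ balances the two terms, giving $\mathbb E\,\mathcal R_T \le M\sqrt{T(D^2 + 2DP_T)}$. Theorem~\ref{thm:transfer} holds pathwise, so taking expectations and applying Jensen's inequality to the concave map $x \mapsto \sqrt{Tx}$ gives $\mathbb E\sqrt{T(\mathcal R_T + \mathcal A_T)} \le \sqrt{T(\mathbb E\,\mathcal R_T + \mathbb E\,\mathcal A_T)}$. Here $\mathcal R_T + \mathcal A_T \ge 0$ by Lemma~\ref{lem:mtp-calibration}, so the square root is well defined. Substituting the bound on $\mathbb E\,\mathcal R_T$, and using monotonicity of the square root, gives \eqref{eq:mtp-ogd-throughput}.
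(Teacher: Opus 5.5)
Your proof is correct and follows the paper's argument essentially line for line. You use the nonexpansive-projection one-step inequality, conditional unbiasedness plus convexity via the tower rule, the $2D\,\|\theta^\star_{t+1}-\theta^\star_t\|_2$ bound to telescope against the moving comparator, and then the step-size substitution with Theorem~\ref{thm:transfer} and Jensen's inequality; your explicit inner-product identity simply derives the comparator-shift bound that the paper states without proof.
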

 
\begin{proof}
Nonexpansiveness of the projection gives
\[
\langle g_t, \theta_t - \theta_t^\star\rangle
\le \frac{\|\theta_t - \theta_t^\star\|_2^2 - \|\theta_{t+1} - \theta_t^\star\|_2^2}{2\eta} + \frac{\eta}{2}\|g_t\|_2^2 .
\]
By convexity and conditional unbiasedness, the expectation of the left-hand side is at least $\mathbb E[\ell_t(\theta_t) - \ell_t(\theta_t^\star)]$. To telescope against a moving comparator, we use
\[
\Bigl|\|\theta_{t+1} - \theta_{t+1}^\star\|_2^2 - \|\theta_{t+1} - \theta_t^\star\|_2^2\Bigr| \le 2D\,\|\theta_{t+1}^\star - \theta_t^\star\|_2 .
\]
Summing over $t$ proves~\eqref{eq:mtp-ogd-regret}. Substituting the chosen $\eta$ gives $\mathbb E\,\mathcal R_T \leq M\sqrt{T(D^2 + 2DP_T)}$, and~\eqref{eq:mtp-ogd-throughput} follows from Theorem~\ref{thm:transfer} and Jensen's inequality.
\end{proof}
 
\begin{proof}[Proof of Corollary~\ref{cor:ogd-full} (full feedback)]Here, $g_t = -2\sum_{j=1}^K W_{t,j-1}(e^{\mathrm{obs}}_{t,j} - \langle\theta_t, x_{t,j}\rangle)\,x_{t,j}$ is unbiased for $\nabla\ell_t(\theta_t)$, and since $W_{t,j-1} \le 1$, $|e^{\mathrm{obs}}_{t,j} - \widehat h_{t,j}| \le 1$ and $\|x_{t,j}\|_2 \le F$, we have $\|g_t\|_2 \le 2FK$. Apply Lemma~\ref{lem:mtp-ogd} with $M = 2FK$ and $\xi_t = 0$. Under realizability, $\mathbb E\,\mathcal D_T = O\bigl(T^{3/4}(1+P_T)^{1/4}\bigr)$ for fixed $K, c_{\min}, D, F$.
\end{proof}
 
\begin{proof}[Proof of Corollary~\ref{cor:ogd-partial} (partial feedback)]
\emph{Unbiased loss and gradient.} Fix $\theta$. Because the exploration coin is independent of the draft given $\mathcal F_t$, for each position
\[
\mathbb E\Bigl[\frac{\mathbf 1\{k_t \ge j\}}{\pi_{t,j}}\, W_{t,j-1}\bigl(e^{\mathrm{obs}}_{t,j} - \widehat h_{\theta,t,j}\bigr)^2 \Bigm| \mathcal F_t\Bigr]
= \mathbb E_{Q_t}\Bigl[W_{t,j-1}\bigl(e_{t,j}(Y_{<j}) - \widehat h_{\theta,t,j}\bigr)^2\Bigr].
\]
This discrepancy is needed only when the indicator is one. Summing over $j$ shows that the observed loss is conditionally unbiased, and differentiating shows this for the gradient of the affine predictor.\looseness=-1
 
\emph{Second moment.} Using $\|\sum_{j=1}^K v_j\|_2^2 \le K\sum_{j=1}^K \|v_j\|_2^2$, $W_{t,j-1} \le 1$ and $\mathbb E[\mathbf 1\{k_t \ge j\} | \mathcal F_t] = \pi_{t,j}$,
\[
\mathbb E\bigl[\|g_t\|_2^2 | \mathcal F_t\bigr] \le 4F^2 K \sum_{j=1}^K \frac{1}{\pi_{t,j}} \le \frac{4F^2K^2}{\varepsilon}.
\]
Substituting $M^2 = 4F^2K^2/\varepsilon$ into~\eqref{eq:mtp-ogd-regret} gives the regression bound.
 
\emph{Selection error.} On a greedy round $\xi_t = 0$. On an exploration round, $\xi_t = \max_k \widehat R_t(k) - \widehat R_t(K) \le (K+1)/c_{\min}$, so $\mathbb E\sum_t \xi_t \le \varepsilon T(K+1)/c_{\min}$. Combining these bounds gives the final rates.
\end{proof}
 
\begin{proof}[Proof of Corollary~\ref{cor:speedup}]
Let $\Delta_t = R_t(k_t^\star) - R_t(k_t) \ge 0$. Since $R_t(k_t^\star) \ge \rho$ and $C_t(k_t) \le c_{\max}$,
\[
G_t(k_t) = C_t(k_t)\bigl(R_t(k_t^\star) - \Delta_t\bigr) \ge \rho\, C_t(k_t) - c_{\max}\Delta_t .
\]
Every round commits at least one token, so $\sigma_N \le N$, and the last round overshoots the budget by at most $K$ tokens: $N \le \sum_{t=1}^{\sigma_N} L_t \le N + K$. The event $\{\sigma_N \ge t\}$ is determined before round $t$, so taking conditional expectations round by round and summing gives
\[
\rho\,\mathbb E J_N
\le \mathbb E\sum_{t=1}^{\sigma_N} L_t + c_{\max}\,\mathbb E\sum_{t=1}^{\sigma_N}\Delta_t
\le N + K + c_{\max}\,\mathbb E\,\mathcal D_N ,
\]
where the last step uses $\sigma_N \le N$ and $\Delta_t \ge 0$. Dividing by $\rho$ gives the completion-time bound, and dividing the baseline's expected completion time by it gives the speedup bound.
\end{proof}
 
\textbf{The square-root dependence is tight.} The transfer in Theorem~\ref{thm:transfer} scales with the square root of the excess loss. The next result shows that this cannot be improved for the forecast-greedy controller, even with independent heads, known costs and $K = 2$.
 
\begin{proposition}[Square-root dependence is unavoidable]
\label{prop:mtp-sharpness}
For every $\delta \in (0, 1/2)$ there is an exact speculative-decoding instance with $K = 2$, costs $C(1) = 1$ and $C(2) = 5/4$, and a pre-draft hazard forecast with excess loss $E = 4\delta^2$, such that forecast-greedy selection incurs
\begin{equation}
R(k^\star) - R(\widehat k) = \frac{2}{5}\sqrt{E}.
\label{eq:mtp-sharpness-one-round}
\end{equation}
Repeating the construction over $T$ rounds gives
\begin{equation}
\mathcal D_T = \frac{2}{5}\sqrt{T\sum_{t=1}^T E_t}.
\label{eq:mtp-sharpness-cumulative}
\end{equation}
Hence no uniform $o(\sqrt E)$ one-round bound holds under the assumptions of the transfer theorem.
\end{proposition}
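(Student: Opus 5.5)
The plan is to build an explicit two-horizon instance in which the true throughputs $R(1)$ and $R(2)$ nearly tie. A hazard forecast that is wrong by $\delta$ at position $2$ then flips the greedy choice, and we compute the regret and the excess loss exactly. We use a binary vocabulary $\{a,b\}$ and independent heads. For position $1$ we take $q_1=p_1$, so that $e_1\equiv 0$ and $h_1=0$, which gives $s_1=1$. For position $2$ we make the target conditional independent of $Y_1$: let $p_2=\delta_a$ be a point mass on $a$, and let $q_2$ put mass $\tfrac12-\delta$ on $a$. Then $e_2(u)=\mathrm{TV}(p_2,q_2)=\tfrac12+\delta$ for every prefix $u$. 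By \eqref{eq:mtp-hazard} this gives $h_2=\tfrac12+\delta\in(\tfrac12,1)$, using $\delta\in(0,1/2)$. The instance is an ordinary exact verifier with a causal (indeed independent) proposal, so all assumptions of Theorem~\ref{thm:transfer} hold, with known costs $C(1)=1$ and $C(2)=5/4$.

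Next we evaluate throughputs using \eqref{eq:mtp-survival}. We have $G(1)=1+s_1=2$, so $R(1)=2$. We also have $G(2)=2+(1-h_2)=\tfrac52-\delta$, so $R(2)=\tfrac45(\tfrac52-\delta)=2-\tfrac45\delta$. The costs were chosen so that $R(2)=R(1)$ exactly when $h_2=\tfrac12$, which is why the tie sits at the midpoint. Hence $k^\star=1$. For the forecast we take $\hat h_1=0$ and $\hat h_2=\tfrac12-\delta$, both in $[0,1]$. This gives $\widehat R(1)=2$ and $\widehat R(2)=\tfrac45(\tfrac52+\delta)=2+\tfrac45\delta>2$. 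Forecast-greedy selection therefore strictly picks $\widehat k=2$, and no tie-breaking issue arises. The regret is $R(1)-R(2)=\tfrac45\delta$.

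By Lemma~\ref{lem:mtp-calibration}, the excess loss is
\[
E=s_0(\hat h_1-h_1)^2+s_1(\hat h_2-h_2)^2=0+(2\delta)^2=4\delta^2 .
\]
So $\sqrt E=2\delta$, and the regret equals $\tfrac45\delta=\tfrac25\sqrt E$, which is \eqref{eq:mtp-sharpness-one-round}.

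For the cumulative statement we repeat the same round $T$ times with the same forecast, so $E_t=E$ and $\xi_t=0$. The hazards do not depend on the committed context, so every round is identical. Then
\[
\mathcal D_T=\tfrac25 T\sqrt E=\tfrac25\sqrt{T\cdot TE}=\tfrac25\sqrt{T\textstyle\sum_t E_t},
\]
which is \eqref{eq:mtp-sharpness-cumulative}. Because $\delta$ can be taken arbitrarily small, the ratio of regret to $\sqrt E$ stays at $\tfrac25$ as $E\to0$, so no $o(\sqrt E)$ bound can hold.

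The only delicate step is tuning the costs and hazards so that the true throughputs tie exactly at $h_2=\tfrac12$. A symmetric $\pm\delta$ perturbation then yields a strict wrong choice, with regret linear in $\delta$ while the excess loss is quadratic in $\delta$. Everything else is direct substitution.
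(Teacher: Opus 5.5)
Your proposal is correct and is essentially the paper's construction, reflected across the tie point $h_2=\tfrac12$. The paper takes a target that always emits one fixed token, with $q_1$ equal to it and $q_2$ putting mass $\tfrac12+\delta$ on it, so $h_2=\tfrac12-\delta$. Its forecast $\tfrac12+\delta$ makes greedy under-draft ($k^\star=2$, $\widehat k=1$), whereas yours makes greedy over-draft ($k^\star=1$, $\widehat k=2$). The throughput arithmetic, the strictness of the wrong choice, and the excess-loss computation $E=(2\delta)^2$ are otherwise identical.

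The one step to tighten is the $T$-round claim, because you leave $p_1$ arbitrary. Repeated rounds must come from a single target model conditioned on the committed string. Under your forecast, greedy always uses $k=2$, and a round commits either $(Y_1,a)$ (if $Y_2=b$ is rejected) or $(Y_1,a,B)$ (if $Y_2=a$ is accepted), at random. Both branches occur with positive probability, and in the first branch $B$ has the same round-start law $p_1$ as the next round's $Y_1'$. So a string $(H_t,Y_1,a,y)$ with $y$ in the support of $p_1$ is a round start (target conditional $p_1$) in one history and a position-$2$ context (point mass on $a$) in another. This forces $p_1=q_1=$ point mass on $a$, i.e., a target that emits $a$ after every prefix. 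That is exactly the choice the paper uses to assert that all committed tokens coincide and the rounds are consistent rounds of one decoding process. With $p_1$ fixed this way, your repetition argument $\mathcal D_T=\tfrac25 T\sqrt E=\tfrac25\sqrt{T\sum_t E_t}$ goes through unchanged.
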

 
\begin{proof}
\emph{Instance.} Take the vocabulary $\{0,1\}$ and a target that emits $0$ with probability one after every prefix. The independent proposal heads are $q_1(0) = 1$ and $q_2(0) = 1/2 + \delta$. The first token is always accepted, so $W_1 = 1$ and $e_1 = h_1 = 0$. At the second position $e_2 = \mathrm{TV}(p_2, q_2) = 1/2 - \delta$: a proposed $0$ is always accepted and a proposed $1$ always rejected, and both the correction and the bonus token are $0$. Hence $h_2 = 1/2 - \delta$, $s_1 = 1$ and $s_2 = 1/2 + \delta$.
 
\emph{Forecast and excess loss.} Take $\widehat h_1 = 0$ and $\widehat h_2 = 1/2 + \delta$. The discrepancies are deterministic and $W_0 = W_1 = 1$, so $\ell(\widehat h) = (2\delta)^2 = 4\delta^2$ and $\ell^{\mathrm B} = 0$, giving $E = 4\delta^2$. Both the forecast and the true hazards lie in the affine class $\widehat h_{\theta,j} = \theta_j$ on $\Theta = [0,1]^2$.
 
\emph{Regret.} Counting the correction or bonus token, $G(1) = \widehat G(1) = 2$, $G(2) = 5/2 + \delta$ and $\widehat G(2) = 5/2 - \delta$. With the stated costs, $R(1) = \widehat R(1) = 2$, $R(2) = 2 + 4\delta/5$ and $\widehat R(2) = 2 - 4\delta/5$. So $k^\star = 2$ and $\widehat k = 1$ are unique, and the regret is $4\delta/5 = (2/5)\sqrt E$. Since $K$ and the costs stay fixed as $\delta \downarrow 0$, no bound of the form $o(\sqrt E)$, and in particular none of the form $cE$, can hold uniformly.
 
\emph{Many rounds.} Use the same models, costs and forecasts at every round. All committed tokens are $0$ and the models ignore the history, so these are consistent rounds of one decoding process, with $E_t = 4\delta^2$ and $\mathcal D_T = 4T\delta/5 = (2/5)\sqrt{T\sum_t E_t}$.
\end{proof}

The construction above places the two horizons arbitrarily close to a decision boundary. When every suboptimal horizon is separated from the best one, the transfer becomes linear in the excess loss.
 
\begin{proposition}[Transfer under a throughput gap]
\label{prop:mtp-margin}
Under the known-cost assumptions of Theorem~\ref{thm:transfer}, suppose $\exists\gamma > 0$ such that $R_t(k_t^\star) - R_t(k) \in \{0\} \cup [\gamma, \infty)$ for every round $t$ and horizon $k$. Then
\[
\mathcal D_T \le \frac{4\Lambda_K^2}{\gamma}\bigl(\mathcal R_T + \mathcal A_T\bigr) + 2\sum_{t=1}^T \xi_t .
\]
\end{proposition}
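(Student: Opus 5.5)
The plan is to work round by round, reusing the per-round inequality from the proof of Theorem~\ref{thm:transfer}, and then sum. Write $\Delta_t = R_t(k_t^\star) - R_t(k_t)$ and $E_t = \mathcal{E}_t(\theta_t)$. That proof gives, for every round and with no margin assumption,
\[
\Delta_t \le a_t + \xi_t, \qquad a_t := 2\Lambda_K\sqrt{E_t}.
\]
It suffices to show the per-round bound $\Delta_t \le a_t^2/\gamma + 2\xi_t = 4\Lambda_K^2 E_t/\gamma + 2\xi_t$. Summing over $t$ and using $\sum_t E_t = \mathcal R_T + \mathcal A_T$ (Lemma~\ref{lem:mtp-calibration}) then yields the claim.

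By the gap assumption, either $\Delta_t = 0$ or $\Delta_t \ge \gamma$. In the first case the per-round bound is trivial, since its right-hand side is nonnegative. In the second case I split on the size of $a_t$ relative to $\gamma$.

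\emph{Case $a_t \ge \gamma$.} Then $a_t \le a_t^2/\gamma$. Hence $\Delta_t \le a_t + \xi_t \le a_t^2/\gamma + \xi_t$, which is at most $a_t^2/\gamma + 2\xi_t$.

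\emph{Case $a_t < \gamma$.} Here the point is that the selection error must be large. From $\gamma \le \Delta_t \le a_t + \xi_t$ we get $\xi_t \ge \gamma - a_t > 0$. Adding this slack once more gives
\[
\Delta_t \le a_t + \xi_t \le a_t + 2\xi_t - (\gamma - a_t) = 2\xi_t + 2a_t - \gamma .
\]
It remains to check $2a_t - \gamma \le a_t^2/\gamma$. This is equivalent to $(a_t - \gamma)^2 \ge 0$, so it always holds.

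The only delicate step is this second case. The naive route says that either $\xi_t$ or $a_t$ carries half of $\Delta_t$. That route, combined with $\Delta_t \ge \gamma$, loses a factor of $4$ and gives $16\Lambda_K^2/\gamma$. The sharper splitting above, which spends the extra copy of $\xi_t$ against $\gamma - a_t$ and closes with a completed square, is what recovers the stated constant $4\Lambda_K^2/\gamma$. Nothing about how $\theta_t$ is produced is used, so the bound holds for every comparator sequence and every realization, exactly as in Theorem~\ref{thm:transfer}.
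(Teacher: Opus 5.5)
Your proof is correct and follows essentially the paper's route: you reuse the per-round bound $\Delta_t \le 2\Lambda_K\sqrt{E_t} + \xi_t$, apply the gap dichotomy, and close with a completed square. The only difference is where the square is completed: the paper applies Young's inequality $2d_t \le \Delta_t/2 + 2d_t^2/\Delta_t$ (with $d_t = \Lambda_K\sqrt{E_t}$) and then uses $\Delta_t \ge \gamma$, whereas you apply the same AM--GM step directly with $\gamma$ in place of $\Delta_t$. Your split on $a_t \ge \gamma$ versus $a_t < \gamma$ is harmless but unnecessary, because $\gamma \le \Delta_t \le a_t + \xi_t$ gives $\xi_t \ge \gamma - a_t$ in either case, so your second-case argument already covers both.
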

 
\begin{proof}
Let $\Delta_t = R_t(k_t^\star) - R_t(k_t)$ and $d_t = \Lambda_K\sqrt{E_t}$. Step 2 of the proof of Theorem~\ref{thm:transfer} gives $\Delta_t \le 2d_t + \xi_t$. If $\Delta_t = 0$ there is nothing to prove. Otherwise $\Delta_t \ge \gamma$, and Young's inequality $2d_t \le \Delta_t/2 + 2d_t^2/\Delta_t$ gives, after rearranging,
\[
\Delta_t \le \frac{4d_t^2}{\Delta_t} + 2\xi_t \le \frac{4\Lambda_K^2}{\gamma}E_t + 2\xi_t .
\]
Summing over $t$ completes the proof.
\end{proof}
 
\textbf{Predicting survival directly.} Hazards are one way to parameterize acceptance survival. The same verification feedback also supports predicting $s_{t,j}$ directly, using either of the labels
\begin{equation}
Z_{t,j} = W_{t,j},
\qquad
Z_{t,j}^{\mathrm{RB}} = W_{t,j-1}\bigl(1 - e^{\mathrm{obs}}_{t,j}\bigr).
\label{eq:direct-survival-labels}
\end{equation}
Both have conditional mean $s_{t,j}$ given $\mathcal F_t$. The second equals $\mathbb E[W_{t,j} | \mathcal F_t, Y_{<j}]$: it integrates out the current proposed token, so it is a Rao--Blackwellization of the first and has no larger conditional variance. Both are available at every drafted position under the feedback assumptions of Section~\ref{sec:controller}.
 
Let $\widehat s_{\psi,t,j}$ be a pre-draft survival forecast, let $\zeta_{t,j}$ be either label, and define $\ell_t^{\mathrm S}(\psi) = \mathbb E_{Q_t}\bigl[\sum_{j=1}^K (\zeta_{t,j} - \widehat s_{\psi,t,j})^2\bigr]$. As in Lemma~\ref{lem:mtp-calibration}, its excess loss is
\begin{equation}
E_t^{\mathrm S}(\psi) := \ell_t^{\mathrm S}(\psi) - \ell_t^{\mathrm S,B} = \sum_{j=1}^K \bigl(\widehat s_{\psi,t,j} - s_{t,j}\bigr)^2 .
\label{eq:direct-survival-calibration}
\end{equation}
With the same exploration scheme, the observed loss $\widehat\ell_t^{\mathrm S}(\psi) = \sum_{j=1}^{k_t} (\zeta_{t,j} - \widehat s_{\psi,t,j})^2 / \pi_{t,j}$ is conditionally unbiased. With $\widehat G_t^{\mathrm S}(k) = 1 + \sum_{j=1}^k \widehat s_{\psi_t,t,j}$, Cauchy-Schwarz gives
\begin{equation}
\bigl|\widehat G_t^{\mathrm S}(k) - G_t(k)\bigr| \le \sqrt{k}\,\sqrt{E_t^{\mathrm S}(\psi_t)},
\label{eq:direct-survival-gain}
\end{equation}
so with known costs the one-round throughput regret is at most $2\sqrt{K E_t^{\mathrm S}(\psi_t)}/c_{\min} + \xi_t$.

\section{Round Costs and Measured Throughput}
\label{app:costs}
The marginal value of an extra position makes the tradeoff explicit. If $\Delta C_t(k)=C_t(k+1)-C_t(k)\ge0$, then
\begin{equation}
 R_t(k+1)\ge R_t(k)
 \quad\Longleftrightarrow\quad
 s_{t,k+1}\ge R_t(k)\,\Delta C_t(k).
 \label{eq:marginal-rule}
\end{equation}
The gain in expected committed tokens must cover the extra round cost. This comparison is local; the controller scores every candidate and does not assume that $R_t(k)$ is unimodal.

\paragraph{Estimated latency costs.} Write $E_t=E_t(\theta_t)$ as in \eqref{eq:mtp-calibration}.
If the controller instead uses
$\widehat R_t(k)=\widehat G_t(k)/\widehat C_t(k)$,
assume $\widehat C_t(k)\ge c_{\min}$ and define
\[
\delta_t^C
=
\max_{1\le k\le K}
|\widehat C_t(k)-C_t(k)|.
\]
Then
\[
\max_k|\widehat R_t(k)-R_t(k)|
\le
\Lambda_K\sqrt{E_t}
+
\frac{K+1}{c_{\min}^2}\delta_t^C.
\]
Thus Theorem~\ref{thm:transfer} becomes
\[
\mathcal D_T
\le
2\Lambda_K\sqrt{T(\mathcal R_T+\mathcal A_T)}
+
\frac{2(K+1)}{c_{\min}^2}
\sum_{t=1}^T\delta_t^C
+
\sum_{t=1}^T\xi_t,
\]
where $\xi_t$ is now defined relative to the estimated-cost
objective.

\paragraph{Connection to measured acceptance.} Over $T$ complete rounds, define the pooled draft acceptance
rate, average window length, and average realized round cost by
\[
\widehat a_T
=
\frac{\sum_{t=1}^T A_t(k_t)}
     {\sum_{t=1}^T k_t},
\qquad
\overline k_T
=
\frac1T\sum_{t=1}^T k_t,
\qquad
\overline c_T
=
\frac1T\sum_{t=1}^T c_t.
\]
Since $L_t=1+A_t(k_t)$, the measured throughput satisfies the
exact identity
\begin{equation}
\frac{\sum_{t=1}^T L_t}{\sum_{t=1}^T c_t}
=
\frac{1+\widehat a_T\,\overline k_T}{\overline c_T}.
\label{eq:mtp-empirical-throughput}
\end{equation}
Consequently, acceptance improvements translate into throughput
improvements through both the average window length and the
round cost.
Acceptance alone is insufficient: for a fixed context and draft
law,
\[
\frac{\mathbb E[A_t(k)\mid\mathcal F_t]}{k}
=
\frac1k\sum_{j=1}^k s_{t,j}
\]
is nonincreasing in $k$, so shortening the window increases
acceptance without increasing throughput.

The identity in \eqref{eq:mtp-empirical-throughput} applies to complete rounds under the one-correction-or-bonus convention. An output cutoff or early end-of-sequence token requires recording the actual committed count; the pooled identity should not be applied blindly to such truncated rounds.

\newpage
\section{Experimental Setup and Further Results}
\label{app:setup}
This section describes our experimental setup and provides further results. We use the following checkpoints that can be found on HuggingFace:

\begin{enumerate}
\item \texttt{jwkirchenbauer/Qwen3-4B-Inst-2507-MTP}, 
\item \texttt{facebook/multi-token-prediction} (7B\_1T\_4), 
\item \texttt{meta-llama/Llama-3.1-8B-Instruct} with Medusa heads, 
\item \texttt{Qwen/Qwen3-14B}, 
\item \texttt{meta-llama/Llama-2-\{7b,13b\}-hf}.
\end{enumerate}

\begin{table*}[hbt!]
\centering
\footnotesize
\caption{MTP draft/target experiment matrix, run without KV-caches. Rows 1-3 are the core self-speculative pairings; rows 4-5 are the cross-model scaling pairings. Each row is evaluated on GSM8K and AIME 2025 with a fixed window, AMTP, and plain (next-token) decoding of the target.}
\label{tab:mtp-experiment-matrix}
\resizebox{\textwidth}{!}{
\begin{tabular}{@{}clllllc@{}}
\toprule
\# & Family & Pairing & Draft & Target & Plain baseline & Max $k$ \\
\midrule
1 & Self-distilled & Self-speculative & Qwen3-4B-Inst-2507-MTP & Same checkpoint & Same checkpoint & 16 \\
2 & Meta MTP       & Self-speculative & multi-token-prediction 7B & Same checkpoint & Llama-2-7B & 4 \\
3 & Medusa         & Self-speculative & Llama-3.1-8B-Instruct + Medusa heads & Same checkpoint & Llama-3.1-8B-Instruct & 6 \\
\midrule
4 & Self-distilled & Cross-model & Qwen3-4B-Inst-2507-MTP & Qwen3-14B & Qwen3-14B & 16 \\
5 & Meta MTP       & Cross-model & multi-token-prediction 7B & Llama-2-13B & Llama-2-13B & 4 \\
\bottomrule
\end{tabular}}
\end{table*}

\textbf{Hardware and Software.}
Self-distilled Qwen3-4B, its Qwen3-14B cross-model pairing and Medusa runs use a NVIDIA H100 on a university SLURM cluster; Meta MTP runs use a single NVIDIA H100 SXM on a rented cloud pod. Every run occupies one GPU exclusively and processes one sample at a time, so measured wall time is not distorted by co-located jobs. Throughput is therefore comparable between conditions on the same hardware but not across the two hardware groups. Models are loaded in the checkpoint's native precision through Hugging Face \texttt{transformers}; evaluations are driven by Inspect~AI, using a custom model provider that wraps our speculative-decoding engine. The exact package versions are pinned in the released code's lock file.\looseness=-1

\textbf{Tasks and Caps.} \textbf{AIME 2025} uses all 30 problems with a limit of 1{,}000 verification rounds and 3{,}000 generated tokens per problem. \textbf{GSM8K} uses the standard ten-shot prompt (fixed few-shot seed) with a limit of 300 rounds per problem for the speculative conditions. The self-distilled runs (self-speculative and paired with the Qwen3-14B target) use all 1{,}319 test problems; the Medusa runs use the first 250. The Meta MTP checkpoints are base models that do not emit an end-of-sequence token after answering, so they generate until the round cap; for the Meta 13B pair on GSM8K we therefore cap each of the first 100 problems at 100 rounds (300 tokens for plain decoding) and use identical caps across conditions, and we re-analyse the earlier 300-round Meta 7B fixed run on the same prefix. These runs measure throughput and acceptance only, since a 7--13B base model's GSM8K accuracy is not a meaningful check of lossless decoding. The Qwen3-14B plain-decoding baseline for the self-distilled cross-model pairing reached 1{,}166 of 1{,}319 GSM8K problems before its time limit; its throughput and accuracy are computed over that completed subset, and the fixed and adaptive conditions for this pairing still cover all 1{,}319. Sampling draws from the unmodified softmax of each model (no temperature scaling), without a fixed random seed, so runs are not bit-for-bit repeatable.

\textbf{Metrics.} For a run with rounds $t=1,\dots,T$, let $c_t$ be the number of tokens committed in round $t$ (accepted drafts plus the one token sampled from the target), $k_t$ the window used, and $\tau_t$ the measured wall time of the round, including drafting, verification and synchronization.
\begin{align*}
\text{tokens/round} &= \tfrac{1}{T}\textstyle\sum_t c_t, &
\text{acceptance rate} &= \tfrac{\sum_t (c_t-1)}{\sum_t k_t},\\
\text{throughput (tok/s)} &= \tfrac{\sum_t c_t}{\sum_t \tau_t}, &
\text{speed-up} &= \tfrac{\text{throughput}}{\text{throughput of plain decoding}}.
\end{align*}
For plain next-token decoding, a round produces exactly one token ($c_t{=}1$, no drafting). Sums run over all problems of an evaluation, so throughput is a token-weighted, not a per-problem, average. We report accuracy from the task's own scorer.

\textbf{Baselines.} Plain decoding uses the same target model and prompt with the standard autoregressive loop, also without a KV-cache, so speed-ups isolate the effect of speculation. For the self-speculative pairings (self-distilled Qwen3-4B, Medusa on Llama-3.1-8B-Instruct) the plain-decoding baseline is the same checkpoint decoding without drafting. Meta's 7B multi-token-prediction checkpoint has no separately released plain next-token model, so its plain-decoding baseline is Llama-2-7B, the base architecture it was fine-tuned from; for the 13B pairing, the target itself, Llama-2-13B, is the baseline. For the self-distilled cross-model pairing, the baseline is the target alone, Qwen3-14B.

\subsection{Draft-Model Implementations}\label{app:drafts}

All draft models implement one interface: given the current context, return $k$ sampled tokens together with the $k$ distributions they were sampled from. The engine is agnostic to how the draft produces them, so AMTP and the verification step are shared across families.

\textbf{Self-speculative and cross-model pairings.} In a self-speculative pairing the same weights draft and verify and are loaded once. In a cross-model pairing a small multi-token-prediction draft proposes tokens for a larger target of the same vocabulary: self-distilled Qwen3-4B for Qwen3-14B, and Meta's 7B multi-token-prediction model for Llama-2-13B. The engine checks that the draft and target share a tokenizer and vocabulary; the self-distilled model declares extra mask tokens.

\textbf{Self-distilled Qwen3-4B (mask-token drafting).} The checkpoint drafts $k$ tokens in one forward pass by appending $k{-}1$ reserved mask tokens (ids 151669--151700) to the context and reading the logits at the last real position and at each mask position. It supports $k\le 16$. It ships without its own verification scheme, so verification is performed by our engine with the standard accept/reject rule.

\textbf{Meta MTP (independent parallel heads).} The 7B checkpoint has four output heads that read the trunk's final hidden state and predict tokens $t{+}1,\dots,t{+}4$ independently. The checkpoint is not in Hugging Face format, so we load it with a self-contained implementation (transformer trunk, four heads, SentencePiece tokenizer). A single trunk forward produces all 4 distributions.

\textbf{Medusa on Llama-3.1-8B-Instruct.} Six residual-block heads, each with its own output projection, predict from the last hidden state of the base model. The base model's own next-token distribution supplies position one; head $i$ then predicts position $i{+}2$. Head weights are loaded from a separate repository, and the base model is the unmodified instruct checkpoint.

\textbf{No KV-cache.} All runs recompute the full context on every forward pass. This makes per-round cost depend visibly on context length, which AMTP's cost model represents (Appendix~\ref{app:cb-detail}), and it removes cache-rollback logic that differs between attention variants and would confound the comparison across draft families. Absolute throughput is consequently lower than with an optimised cached implementation; the relative comparisons between conditions are what we report.

\begin{table}[t]
\centering
\small
\begin{tabular}{@{}lr@{}}
\toprule
Quantity & Value \\
\midrule
Maximum window $K$ & 4 / 6 / 16 (Meta / Medusa / self-dist.) \\
Hazard model & affine (linear, no sigmoid link), 5 inputs \\
Hazard optimiser, learning rate & Projected OGD, $0.01$ \\
Hazard projection radius $D$ & $10.0$ \\
Exploration rate $\varepsilon$ & $0.1$ \\
Cost optimiser, learning rate & Adam, $0.01$ \\
Start-up updates $u_{\min}$ & $20$ \\
Context scale $\ell$, cost scale $\gamma$ & $100$, $1000$ \\
Checkpoint interval & every $50$ updates \\
\bottomrule
\end{tabular}
\caption{AMTP configuration used in all experiments. No value was tuned per model or per task.}
\label{tab:cb-config}
\end{table}

\subsection{Adaptive Multi-Token Prediction Details}\label{app:cb-detail}

\textbf{Hazard targets.} The hazard label at drafted position $j$ is $\mathrm{TV}(p_j,q_j)=1-\sum_v\min(p_j(v),q_j(v))$, the probability that a token drawn from $p_j$ is rejected. It is computed for every drafted position, including those after the first rejection, via the target distributions already produced by the verification.\looseness=-1

\textbf{Cost target.} The cost model is fitted to the whole round's measured wall time, as AMTP's decision only needs the total. Since every run disables the KV-cache (Appendix~\ref{app:drafts}), the cost model includes context length as a feature alongside the window $k$; a model fit with context length excluded explains almost none of the round-to-round cost variance under this setting.

\textbf{Checkpoint lineage.} Control-block state is saved every 50 updates under a key made of the draft checkpoint, target checkpoint and an implementation tag. The tag changes whenever the cost regime changes, and the current tag identifies runs without a key--value cache and with the rescaled cost model, so weights fitted under a different regime are never loaded.

\textbf{Start-up.} For the first 20 updates the window is drawn with probability proportional to $1/(n_k+1)$, so the cost model observes several window sizes. All conditions labelled ``adaptive'' begin with this phase; the numbers we report include it.

\subsection{Complete Results}\label{app:results}

Tables~\ref{tab:full-aime} and~\ref{tab:full-gsm8k} list our final results behind the main-text tables, and Table~\ref{tab:k-dist} gives the distribution of windows chosen by AMTP. Tables~\ref{tab:throughput}--\ref{tab:ktstats} in the main text are derived from the same records.

\pgfplotsset{
  paper/.style={tick label style={font=\footnotesize}, label style={font=\footnotesize}, title style={font=\small},
    legend style={font=\scriptsize, draw=none, fill=none}, ymajorgrids, grid style={gray!25}, axis line style={black!70}},
}

\begin{table}[hbt!]
\centering
\footnotesize
\setlength{\tabcolsep}{3pt}
\begin{tabular}{@{}lrrrrrr@{}}
\toprule
Model / eval & $k{=}1$ & $2$ & $3$ & $4$ & $5$--$6$ & $\geq 7$ \\
\midrule
Meta MTP 7B, AIME & 1.4 & 7.3 & 27.0 & 64.3 & 0.0 & 0.0 \\
Meta MTP 13B, AIME & 9.5 & 24.6 & 29.5 & 36.5 & 0.0 & 0.0 \\
Meta MTP 7B, GSM8K\textsuperscript{$\ast$} & 0.3 & 1.4 & 19.0 & 79.2 & 0.0 & 0.0 \\
Meta MTP 13B, GSM8K\textsuperscript{$\ast$} & 0.5 & 2.9 & 16.7 & 79.9 & 0.0 & 0.0 \\
Medusa 8B, GSM8K & 0.1 & 4.7 & 29.2 & 43.5 & 22.5 & 0.0 \\
Self-dist.\ 4B, GSM8K & 0.0 & 0.0 & 0.0 & 0.2 & 2.9 & 96.9 \\
Self-dist.\ 4B $\to$ 14B, AIME & 0.3 & 1.1 & 4.7 & 10.4 & 29.7 & 53.8 \\
Self-dist.\ 4B $\to$ 14B, GSM8K & 0.0 & 0.0 & 0.2 & 0.6 & 4.0 & 95.1 \\
\bottomrule
\end{tabular}
\caption{Share of verification rounds (\%) in which the control block chose each window size, over the full round population. The distribution is wide for every model, and the mode differs by model and by task. $^\ast$Capped runs (first 100 problems, first 100 rounds).}
\label{tab:k-dist}
\end{table}

\begin{table*}[hbt!]
\centering
\footnotesize
\setlength{\tabcolsep}{3pt}
\begin{tabular}{@{}llrrrrrr@{}}
\toprule
Draft $\to$ target & Condition & Rounds & Tok./round & Accept.\ (\%) & Tok./s & Gen.\ (h) & Acc.\ (\%) \\
\midrule
Self-dist.\ Qwen3-4B & Plain & 81{,}791 & 1.00 & -- & 18.8 & 1.21 & 20.0 \\
 & Fixed $k{=}16$ & 25{,}549 & 3.84 & 17.7 & 34.0 & 0.80 & 30.0 \\
 & Adaptive & 25{,}927 & 3.83 & 33.9 & 34.5 & 0.80 & 26.7 \\
\midrule
Self-dist.\ 4B $\to$ Qwen3-14B & Qwen3-14B & 60{,}091 & 1.00 & -- & 10.4 & 1.61 & 20.0 \\
 & Fixed $k{=}16$ & 18{,}244 & 3.23 & 14.0 & 22.5 & 0.73 & 20.0 \\
 & Adaptive & 16{,}525 & 3.20 & 28.2 & 24.7 & 0.60 & 23.3 \\
\midrule
Meta MTP 7B & Llama-2-7B & 13{,}190 & 1.00 & -- & 12.5 & 0.29 & 0.0 \\
 & Fixed $k{=}4$ & 7{,}009 & 3.00 & 50.0 & 37.7 & 0.16 & 0.0 \\
 & Adaptive & 11{,}268 & 3.03 & 57.2 & 31.9 & 0.30 & 0.0 \\
\midrule
Meta MTP 7B $\to$ Llama-2-13B & Llama-2-13B & 14{,}309 & 1.00 & -- & 6.2 & 0.64 & 0.0 \\
 & Fixed $k{=}4$ & 5{,}370 & 2.76 & 43.9 & 7.6 & 0.54 & 0.0 \\
 & Adaptive & 2{,}772 & 2.51 & 51.6 & 28.2 & 0.07 & 0.0 \\
\midrule
Medusa Llama-3.1-8B & Plain & 53{,}294 & 1.00 & -- & 7.5 & 1.97 & 0.0 \\
 & Fixed $k{=}6$ & 19{,}760 & 2.34 & 22.4 & 23.6 & 0.55 & 0.0 \\
 & Adaptive & 18{,}395 & 2.28 & 46.2 & 25.3 & 0.46 & 0.0 \\
\bottomrule
\end{tabular}
\caption{AIME 2025 evaluations ($n{=}30$ problems in each row). Rounds: total verification rounds over all problems (one round is one token for plain decoding). Tok./s: total committed tokens per total measured round wall time. Gen.\ (h): summed round wall time. Acc.: fraction of problems answered correctly; it is $0$ for every 7--13B model (Meta and Llama), so it carries no signal there.}
\label{tab:full-aime}
\end{table*}

\begin{table*}[hbt!]
\centering
\footnotesize
\setlength{\tabcolsep}{3pt}
\resizebox{\textwidth}{!}{%
\begin{tabular}{@{}llrrrrrrr@{}}
\toprule
Draft $\to$ target & Condition & Problems & Rounds & Tok./round & Accept.\ (\%) & Tok./s & Gen.\ (h) & Acc.\ (\%) \\
\midrule
Self-dist.\ Qwen3-4B & Plain & 1319 & 279{,}050 & 1.00 & -- & 16.2 & 4.80 & 89.4 \\
 & Fixed $k{=}16$ & 1319 & 64{,}008 & 4.67 & 23.0 & 38.6 & 2.15 & 85.0 \\
 & Adaptive & 1319 & 65{,}183 & 4.63 & 32.4 & 38.3 & 2.19 & 85.7 \\
\midrule
Self-dist.\ 4B $\to$ Qwen3-14B & Qwen3-14B\textsuperscript{$\dagger$} & 1166 & 179{,}720 & 1.00 & -- & 6.4 & 7.80 & 92.8 \\
 & Fixed $k{=}16$ & 1319 & 47{,}643 & 4.33 & 20.8 & 21.3 & 2.69 & 92.9 \\
 & Adaptive & 1319 & 47{,}529 & 4.33 & 28.5 & 21.5 & 2.66 & 93.5 \\
\midrule
Meta MTP 7B\textsuperscript{$\ast$} & Llama-2-7B & 100 & 24{,}440 & 1.00 & -- & 4.7 & 1.44 & -- \\
 & Fixed $k{=}4$ & 100 & 8{,}913 & 3.18 & 54.5 & 19.2 & 0.41 & -- \\
 & Adaptive & 100 & 9{,}065 & 3.21 & 58.6 & 19.4 & 0.42 & -- \\
\midrule
Meta MTP 7B $\to$ Llama-2-13B\textsuperscript{$\ast$} & Llama-2-13B & 100 & 23{,}660 & 1.00 & -- & 2.7 & 2.46 & -- \\
 & Fixed $k{=}4$ & 100 & 8{,}108 & 2.72 & 42.9 & 4.7 & 1.31 & -- \\
 & Adaptive & 100 & 8{,}206 & 2.75 & 46.6 & 11.6 & 0.54 & -- \\
\midrule
Medusa Llama-3.1-8B & Plain & 250 & 44{,}877 & 1.00 & -- & 13.0 & 0.96 & 83.2 \\
 & Fixed $k{=}6$ & 250 & 15{,}619 & 2.99 & 33.1 & 20.0 & 0.65 & 79.6 \\
 & Adaptive & 250 & 16{,}739 & 2.90 & 48.2 & 19.4 & 0.69 & 79.2 \\
\bottomrule
\end{tabular}%
}
\caption{Complete GSM8K results. Self-distilled 4B and Qwen3-14B target rows use all 1{,}319 problems where finished; Medusa uses the first 250 problems; $\ast$Meta MTP runs are capped (first 100 problems, first 100 rounds; $\sim$300 tokens for plain decoding) because the Meta base models do not terminate after answering, so their scored accuracy (1--4\%, driven almost entirely by truncation) is not meaningful and is omitted. $\dagger$The Qwen3-14B plain run reached 1{,}166 of 1{,}319 problems before its time limit; its accuracy is over those completed problems.}
\label{tab:full-gsm8k}
\end{table*}

\begin{figure*}[hbt!]
\centering
\begin{subfigure}[t]{\textwidth}
\centering
\begin{tikzpicture}
\begin{groupplot}[
  group style={group size=2 by 1, horizontal sep=1.6cm},
  paper, ybar, width=0.52\textwidth, height=5.2cm,
  ymin=0, ymax=40, ylabel={Tokens / second},
  symbolic x coords={c0,c1,c2,c3,c4}, xtick=data,
  xticklabels={{Meta 7B\\(self)},{Meta 7B to 13B\\(cross)},{Medusa\\8B},{Self-dist.\\4B (self)},{Self-dist.\\4B to 14B (cross)}},
  xticklabel style={align=center, font=\tiny}, xticklabel pos=lower, enlarge x limits=0.15,
  legend to name={fig:throughput-legend},
  legend style={draw=none, font=\footnotesize, legend columns=-1, /tikz/every even column/.append style={column sep=0.4cm}}]
\nextgroupplot[title={AIME 2025}]
\addplot[fill=cPlain, draw=black!60, line width=0.3pt] coordinates {(c0,12.53)(c1,6.19)(c2,17.32)(c3,18.79)(c4,10.36)};
\addplot[fill=cFixed, draw=black!60, line width=0.3pt] coordinates {(c0,37.68)(c1,7.65)(c2,23.56)(c3,33.97)(c4,22.47)};
\addplot[fill=cAdapt, draw=black!60, line width=0.3pt] coordinates {(c0,31.93)(c1,28.17)(c2,25.31)(c3,34.47)(c4,24.67)};
\legend{Plain decoding, Best fixed $k$, Adaptive}
\nextgroupplot[title={GSM8K}]
\addplot[fill=cPlain, draw=black!60, line width=0.3pt] coordinates {(c0,4.71)(c1,2.68)(c2,12.92)(c3,16.16)(c4,6.40)};
\addplot[fill=cFixed, draw=black!60, line width=0.3pt] coordinates {(c0,19.22)(c1,4.68)(c2,19.99)(c3,38.61)(c4,21.31)};
\addplot[fill=cAdapt, draw=black!60, line width=0.3pt] coordinates {(c0,19.41)(c1,11.58)(c2,19.44)(c3,38.33)(c4,21.47)};
\end{groupplot}
\node at ($(group c1r1.north)!0.5!(group c2r1.north) + (0,1.1cm)$) {\ref{fig:throughput}};
\end{tikzpicture}
\caption{Decoding throughput (tokens/second) for plain decoding, the best fixed window, and AMTP, across all five pairings and both evaluation tasks. Every multi-token-prediction condition beats plain decoding; the Meta MTP cross-model pairing (facebook/multi-token-prediction draft to Llama-2-13B target) shows the largest absolute gap between fixed and adaptive.}
\label{fig:throughput}
\end{subfigure}

\vspace{0.6cm}
\begin{subfigure}[t]{\textwidth}
\centering
\begin{tikzpicture}
\begin{axis}[
  paper, ybar, width=0.97\textwidth, height=4.6cm, bar width=18pt,
  ymin=-24, ymax=54, ytick={-20,0,20,40}, ylabel={Adaptive vs.\ best fixed $k$ (\%)},
  symbolic x coords={c0,c1,c2,c3,c4,c5,c6,c7}, xtick=data,
  xticklabels={{Self-dist.\\4B AIME},{Self-dist.\\4B GSM8K},{Medusa\\AIME},{Medusa\\GSM8K},{Meta 7B\\GSM8K},{4B to 14B\\AIME},{4B to 14B\\GSM8K}},
  xticklabel style={align=center, font=\tiny}, xticklabel pos=lower, enlarge x limits=0.08,
  extra y ticks={0}, extra y tick labels={}, extra y tick style={grid=major, major grid style={black!60, thin}},
  nodes near coords, nodes near coords style={font=\scriptsize}, point meta=explicit symbolic]
\addplot[fill=cAdapt, draw=black!60, line width=0.3pt] coordinates {
  (c1,1.5) [+1\%]
  (c2,-0.7) [-1\%]
  (c3,7.4) [+7\%]
  (c4,-2.8) [-3\%]
  (c5,1.0) [+1\%]
  (c6,9.8) [+10\%]
  (c7,0.8) [+1\%]};
\end{axis}
\end{tikzpicture}
\caption{Throughput of AMTP, relative to the best fixed window, self-speculative pairings and the modest-effect cross-model pairing (self-distilled 4B$\to$14B).}
\label{fig:gain}
\end{subfigure}

\vspace{0.4cm}
\begin{subfigure}[t]{\textwidth}
\centering
\begin{tikzpicture}
\begin{axis}[
  paper, ybar, width=0.5\textwidth, height=4.2cm, bar width=32pt,
  ymin=0, ymax=330, ytick={0,100,200,300}, ylabel={Adaptive vs.\ best fixed $k$ (\%)},
  symbolic x coords={c0,c1}, xtick=data,
  xticklabels={{Meta MTP 7B to 13B\\AIME 2025},{Meta MTP 7B to 13B\\GSM8K}},
  xticklabel style={align=center, font=\small}, xticklabel pos=lower, enlarge x limits=0.4,
  nodes near coords, nodes near coords style={font=\small}, point meta=explicit symbolic]
\addplot[fill=cAdapt, draw=black!60, line width=0.3pt] coordinates {
  (c0,268.3) [+268\%]
  (c1,147.4) [+147\%]};
\end{axis}
\end{tikzpicture}
\caption{The cross-model Meta MTP pairing (draft/target cost asymmetry), shown on its own axis given the scale of the effect relative to every other condition.}
\label{fig:gain-crossmodel}
\end{subfigure}
\caption{Top: decoding throughput of plain decoding, the best fixed window and AMTP. Every multi-token-prediction condition is faster than plain decoding. Middle: throughput of AMTP relative to the best fixed window, for self-speculative pairings and the self-distilled cross-model pairing --- effects are modest in both directions, within about 15\% either way. Bottom: the Meta MTP cross-model pairing, shown separately because its gain ($+268\%$ AIME, $+147\%$ GSM8K) is an order of magnitude larger than every other condition --- draft/target cost asymmetry combined with long, variable-length reasoning produces the largest effect in this validation.}
\label{fig:gain-combined}
\end{figure*}

\textbf{Worked Examples.}
We provide real unedited excerpts from logged telemetry of a self-distilled Qwen3-4B adaptive run on AIME 2025: \emph{Let $A$ be the set of positive integer divisors of $2025$. Let $B$ be a randomly selected subset of $A$. The probability that $B$ is a nonempty set with the property that the least common multiple of its elements is $2025$ is $\frac{m}{n}$, where $m$ and $n$ are relatively prime positive integers. Find $m+n$.}

\begin{figure*}[hbt!]
\centering
\includegraphics[width=0.98\linewidth]{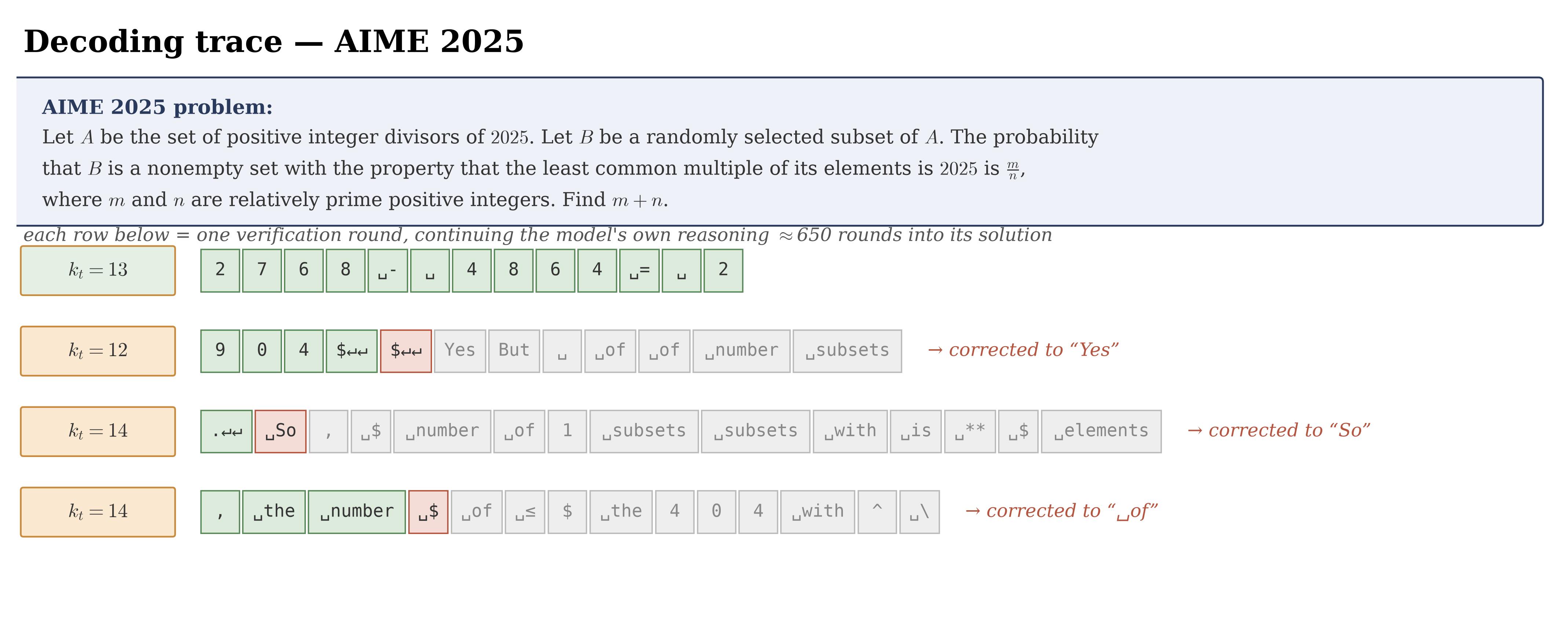}
\caption{Four consecutive real verification rounds (648--651) from one adaptive-control-block run on AIME 2025 (self-distilled Qwen3-4B). Green tokens were accepted, the red token was rejected and replaced by the correction shown on the right (resampled from the residual), and grey tokens were drafted but never verified because the round already stopped at the rejection. The control block re-selects the window $k_t$ every round (shown on the left). All tokens, windows, and corrections are taken directly from logged telemetry.}
\label{fig:decoding-trace}
\end{figure*}

\paragraph{Example 1: a difficult stretch, seven consecutive rounds.} Roughly 650 verification rounds into its own generated solution, the model is deriving the number of qualifying subsets. Table~\ref{tab:worked-trace} lists seven consecutive rounds verbatim. In each round the control block first chooses a window $k_t$ from its own hazard forecast, then drafts exactly $k_t$ tokens; the \emph{accepted} column gives how many of those were accepted before the first rejection (or all $k_t$, if none were rejected), and \emph{correction} gives the resampled token that replaced the first rejected draft token. Draft tokens past the rejection point are never verified and contribute nothing to the committed output.

\begin{table}[h]
\centering
\footnotesize
\setlength{\tabcolsep}{4pt}
\begin{tabular}{@{}rrp{5.6cm}rp{1.6cm}@{}}
\toprule
Round & $k_t$ & Draft & Accepted & Correction \\
\midrule
648 & 13 & \texttt{2 7 6 8 - ␣ 4 8 6 4 = ␣ 2} & 13/13 (all) & --- \\
649 & 12 & \texttt{9 0 4 \$ \$ Yes But ␣ ␣of ␣of ␣number ␣subsets} & 4/12 & \texttt{Yes} \\
650 & 14 & \texttt{. ␣So , ␣\$ ␣number ␣of 1 ␣subsets ␣subsets ␣with ␣is ␣** ␣\$ ␣elements} & 1/14 & \texttt{So} \\
651 & 14 & \texttt{, ␣the ␣number ␣\$ ␣of ␣$\le$ \$ ␣the 4 0 4 ␣with \textasciicircum{} ␣\textbackslash} & 3/14 & \texttt{␣of} \\
652 & 16 & \texttt{␣non ␣with B B ␣have ␣B (S ␣are ␣2 0 \textasciicircum{} \$ ␣= 2 2} & 1/16 & \texttt{empty} \\
653 & 8 & \texttt{␣subsets ␣\$ S ␣L CM ␣= ␣ 2} & 3/8 & \texttt{\$} \\
654 & 7 & \texttt{␣such ␣that ␣L ␣l S (S ␣} & 3/7 & \texttt{CM} \\
\bottomrule
\end{tabular}
\caption{Seven consecutive real verification rounds (648--654) from one adaptive-control-block run. \texttt{␣} marks a leading space in a token; \texttt{\$} and \texttt{\textbackslash} are literal draft tokens (the model was mid-LaTeX). All data is taken directly from logged round telemetry.}
\label{tab:worked-trace}
\end{table}

The committed text across these seven rounds, concatenated, reads: \emph{``2768 - 4864 = 27904\$\textbackslash n\textbackslash nYes.\textbackslash n\textbackslash nSo, the number of nonempty subsets \$S\$ such that LCM''} with two visible self-corrections (\texttt{Yes}, \texttt{So}) where the model's own verification caught a drafted continuation it would not have produced itself. The chosen window swings from $k_t{=}7$ to $k_t{=}16$ round to round, and every round in this stretch needed at least one correction; this is a difficult patch of the derivation, not representative of the easier, longer-window stretches that dominate most of a generation.

\paragraph{Example 2: full acceptance with a bonus token.} Later in the same run, round 101 is more typical of an easy stretch. The context ends ``\,\dots{}we want to count'', following directly from the probability setup. The control block forecasts rejection hazards $\widehat h_{t,1}{=}0.40$, $\widehat h_{t,2}{=}0.48$, $\widehat h_{t,3}{=}0.55$ for the next three positions, giving a predicted gain $\widehat G_t(3)\approx 1 + 0.60+0.31+0.14 \approx 2.05$ expected committed tokens, and the controller selects $k_t{=}3$. Table~\ref{tab:worked-trace-bonus} lists the single round. Since the draft is verified in full with no rejection, the engine draws one additional \emph{bonus} token directly from the target's own distribution rather than discarding the round.

\begin{table}[h]
\centering
\footnotesize
\setlength{\tabcolsep}{4pt}
\begin{tabular}{@{}rrp{3.2cm}rp{1.6cm}@{}}
\toprule
Round & $k_t$ & Draft & Accepted & Bonus \\
\midrule
101 & 3 & \texttt{␣the ␣number ␣of} & 3/3 (all) & \texttt{␣non} \\
\bottomrule
\end{tabular}
\caption{One real verification round (101) with full acceptance, from the same adaptive-control-block run as Table~\ref{tab:worked-trace}. The bonus token is sampled from the target's distribution at the position immediately after the draft, since no rejection occurred.}
\label{tab:worked-trace-bonus}
\end{table}

The committed text for the round is therefore ``\, the number of non'', four tokens from a single target forward pass, with no correction needed.

\end{document}